%%%%%%%%%%%%%%%%%%%%%%%%%%%%%%%%%%%%%%%%%%%%%%%%%%%%%%%%%%%%%%%%%%
%%%%%%%% ICML 2016 EXAMPLE LATEX SUBMISSION FILE %%%%%%%%%%%%%%%%%
%%%%%%%%%%%%%%%%%%%%%%%%%%%%%%%%%%%%%%%%%%%%%%%%%%%%%%%%%%%%%%%%%%

% Use the following line _only_ if you're still using LaTeX 2.09.
%\documentstyle[icml2016,epsf,natbib]{article}
% If you rely on Latex2e packages, like most moden people use this:
\documentclass{article}
\usepackage[titletoc,title]{appendix}
% use Times
\usepackage{times}
% For figures
\usepackage{graphicx} % more modern
\usepackage{subfigure} 

% For citations
\usepackage{natbib}

\usepackage{url}

% For algorithms
\usepackage{algorithm}
\usepackage{algorithmic}
\usepackage[fleqn]{amsmath}
\usepackage{amsthm}
\usepackage{amssymb}
\newtheorem{theorem}{Theorem}
\newtheorem{corollary}{Corollary}
\newtheorem{lemma}{Lemma}

\newtheorem{remark}{Remark}

\newtheorem{assumption}{Assumption}

\DeclareMathOperator{\tr}{tr}
\DeclareMathOperator{\diag}{diag}
\DeclareMathOperator{\var}{Var}
\DeclareMathOperator{\prob}{Pr}
\newcommand{\mv}[1]{\mathbf{#1}}

% As of 2011, we use the hyperref package to produce hyperlinks in the
% resulting PDF.  If this breaks your system, please commend out the
% following usepackage line and replace \usepackage{icml2016} with
% \usepackage[nohyperref]{icml2016} above.
\usepackage{hyperref}

% Packages hyperref and algorithmic misbehave sometimes.  We can fix
% this with the following command.

% Employ the following version of the ``usepackage'' statement for
% submitting the draft version of the paper for review.  This will set
% the note in the first column to ``Under review.  Do not distribute.''
%\usepackage{icml2016} 

% Employ this version of the ``usepackage'' statement after the paper has
% been accepted, when creating the final version.  This will set the
% note in the first column to ``Proceedings of the...''
\usepackage[accepted]{icml2016}

% The \icmltitle you define below is probably too long as a header.
% Therefore, a short form for the running title is supplied here:
\icmltitlerunning{Why Regularized Auto-Encoders Learn Sparse Representation?}

\begin{document} 

\twocolumn[
\icmltitle{Why Regularized Auto-Encoders Learn Sparse Representation?}

% It is OKAY to include author information, even for blind
% submissions: the style file will automatically remove it for you
% unless you've provided the [accepted] option to the icml2016
% package.
\icmlauthor{Devansh Arpit}{devansha@buffalo.edu}
\icmlauthor{Yingbo Zhou}{yingbozh@buffalo.edu}
\icmlauthor{Hung Q. Ngo}{hungngo@buffalo.edu}
\icmlauthor{Venu Govindaraju}{govind@buffalo.edu}
\icmladdress{SUNY Buffalo}

% You may provide any keywords that you 
% find helpful for describing your paper; these are used to populate 
% the "keywords" metadata in the PDF but will not be shown in the document
\icmlkeywords{auto-encoders, sparse representation}

\vskip 0.3in
]

\addtolength{\baselineskip}{-0.15mm}

\begin{abstract}
	Sparse distributed representation is the key to learning useful features in deep
	learning algorithms, because not only it is an efficient mode of data
	representation, but also -- more importantly -- it captures the generation
	process of most real world data. While a number of regularized auto-encoders
	(AE) enforce sparsity explicitly in their learned representation and others don't, there has been little formal analysis on what encourages sparsity in these
	models in general. Our objective is to formally study this general problem for
	regularized auto-encoders. We provide sufficient conditions on both regularization
	and activation functions that encourage sparsity. We show that multiple popular
	models (de-noising and contractive auto encoders, e.g.) and activations
	(rectified linear and sigmoid, e.g.) satisfy these conditions; thus, our
	conditions help explain sparsity in their learned representation. Thus our
	theoretical and empirical analysis together shed light on the properties of
	regularization/activation that are conductive to sparsity and unify
	a number of existing auto-encoder models and activation functions under the same
	analytical framework. 
\end{abstract}

\section{Introduction}
Sparse Distributed Representation (SDR) \cite{hinton1984distributed} constitutes
a fundamental reason behind the success of deep learning. On one hand, it is an
efficient way of representing data that is robust to noise; in fact, some of the main advantages of sparse distributed representation in the context of deep neural networks has been shown to be information disentangling and manifold flattening \cite{better_mixing}, as well as better linear separability and representational power \cite{sparse_rectifier_net}. On the other hand, and more importantly, SDR captures the data generation process itself and is biologically inspired \cite{sr_hubel1959,sr_brain4,ann_brain3}, which makes this mode of representation useful in the first place.

For these reasons, our objective in this paper is to investigate why a number of
regularized Auto-Encoders (AE) exhibit similar behaviour, especially in terms of
learning sparse representations. AEs are especially interesting for this matter
because of the clear distinction between their learned \textit{encoder}
representation and \textit{decoder} output. This is in contrast with other deep
models where there is no clear distinction between the encoder and decoder
parts. The idea of AEs learning sparse representations (SR) is not new. Due to
the aforementioned biological connection between SR and NNs, a natural follow-up
pursued by a number of researchers was to propose AE variants that encouraged
sparsity in their learned representation \cite{Lee08sparsedeep,psd,sparse_ae}.
On the other hand, there has also been work on empirically analyzing/suggesting
the sparseness of hidden representations learned after pre-training with
unsupervised models \cite{ZeroBiasAutoencoders,rbm_sc,relu_rbm}. However, to the
best of our knowledge, there has been no prior work formally analyzing why
regularized AEs learn sparse representation in general. The main challenge
behind doing so is the analysis of non-convex objective functions. In addition,
questions regarding the efficacy of activation functions and the choice of
regularization on AE objective are often raised since there are multiple
available choices for both. We also try to address these questions with regards
to SR in this paper.

We address these questions in two parts. First, we prove sufficient conditions
on AE regularizations that encourage low pre-activations in hidden units. We
then analyze the properties of activation functions that when coupled with such
regularizations result in sparse representation. Multiple popular activations
have these desirable properties. Second, we show that multiple popular AE
objectives including {\em de-noising auto-encoder} (DAE)
\cite{Vincent:2008:ECR:1390156.1390294} and {\em contractive auto-encoder} (CAE)
\cite{Rifaicae} indeed have the suggested form of regularization; thus
explaining why existing AEs encourage sparsity in their latent representation.
Based on our theoretical analysis, we also empirically study multiple popular AE
models and activation functions in order to analyze their comparative behaviour
in terms of sparsity in the learned representations.  Our analysis thus shows
why various AE models and activations lead to sparsity. As a result, they are
unified under a framework uncovering the fundamental properties of regularizations and activation functions that most of these existing models possess.
%Thus our analysis provides novel tools that can be used for both, comparing existing AEs under the same hood, as well as predicting the behaviour of (new) regularizations and activation functions (in terms of sparsity) leading to a deeper understanding of AEs for unsupervised representation learning.

\section{Auto-Encoders and Sparse Representation}

Auto-Encoders (AE) \cite{backprop, autoencoder} are a class of single hidden
layer neural networks trained in an unsupervised manner. It consists of an {\em
	encoder} and a {\em decoder}. An input $({\mathbf{x}} \in \mathbb{R}^{n})$ is
first mapped to the latent space with 
$\mathbf h = f_{e}(\mathbf{x}) = s_{e}(\mathbf{W}\mathbf{x} + \mathbf b_e)$
is the hidden representation vector, $s_{e}$ is the encoder
activation, $\mathbf{W} \in \mathbb{R}^{m \times n}$ is the weight
matrix, and $\mathbf b_{e} \in \mathbb{R}^{m}$ is the encoder bias. 
Then, it maps the hidden output back to the original space by 
$\mathbf{y} = f_{d}(\mathbf{h}) = s_{d}(\mathbf{W}^{T}\mathbf{h}) $
where $\mathbf{y}$ is the reconstructed counterpart of $\mathbf{x}$ and  $s_{d}$ is
the decoder activation. The objective of a basic auto-encoder is to minimize the
following with respect to the parameters $\{ \mathbf{W},\mathbf{b}_{e} \}$
%
%\vspace{-12pt}
\begin{equation}
\mathcal{J}_{AE} = 
\mathbb{E}_{\mathbf{x}} [ \ell(\mathbf{x},f_{d}(f_{e}({\mathbf{x}})))]
\end{equation}
where $\ell(\cdot)$ is the squared loss function. The motivation behind this
objective is to capture predominant repeating patterns in data. Thus although
the auto-encoder optimization learns to map an input back to itself, the focus
is on learning a noise invariant representation (manifold) of data.

\subsection{Part I: What encourages \textit{sparsity} during Auto-Encoder training?}
\label{sec_sparsity_reason}

\subsubsection{Sparsity and our assumption}

Learning a dictionary adapted to a set of training data such that the latent code is sparse is generally formulated as the following optimization problem \cite{sr_brain4}
\begin{equation}
\label{eq_sc}
\min_{\mathbf{W},\mathbf{h}} \quad
\sum_{i=1}^{N} \left( \lVert \mathbf{x}_{i} - {\mathbf{W}}^{T}{\mathbf{h}}_{i}
\rVert^{2} + \lambda \lVert \mathbf{h}_{i} \rVert_{1} \right)
\end{equation}
The above objective is convex in each one of $\mathbf{W}$ and $\mathbf{h}$ when
the other is fixed and hence it is generally solved alternately in each variable
while fixing the other. Note that $\ell_{1}$ penalty is the driving force in the
above objective and forces the latent variable to be sparse. 

This section analyses the factors that are required for sparsity in AEs. Note
that in~\eqref{eq_sc} we optimize for a different parameter $\mathbf{h}_{i} $
for each corresponding sample. In the case of AEs, we do not have a separate
parameter that denotes the hidden representation corresponding to every sample
individually. Instead the hidden representation for every sample is a function
of the sample itself along with other network parameters. So in order to define
the notion of sparsity of hidden representation in AEs, we will treat each
hidden unit $h_{i} = s_{e}( \mathbf{{W}}_{j}\mathbf{x} + b_{e_{j}} )$ as a
random variable which itself is a function of the random variable $\mathbf{x}$.
Then the average activation fraction of a unit is the (probability) mass of (data) distribution for which the hidden unit activates. For finite sample datasets, this becomes the fraction of data samples for which the unit activates.

Also note that SDR dictates that all representational units participate in data
representation while very few units activate for a single data sample. Thus a
major difference between SDR and SR is that of {\em dead units} (units that do
not activate for any data sample) since sparsity can in general also be achieved
when most units are dead. However, the latter scenario is undesirable because it
does not truly capture SDR. Thus we model and study the conditions that
encourage sparsity in hidden units; and we also empirically show these
conditions are capable of achieving SDR.

For our analysis, we will use linear decoding which addresses the case of continuous real valued data distributions. We will now show that both regularization and activation function play an important role for achieving sparsity. In order to do so, we make the following assumption,
\begin{assumption}
	\label{assump_residual}
	We assume that the data $\mv x$ is drawn from a distribution $\mathbf{x} \sim
	\mathcal{X}$ for which $\mathbb{E}_{\mathbf{x}}[\mathbf{x}] = \mathbf{0} $ and $
	\mathbb{E}_{\mathbf{x}}[ \mathbf{x}\mathbf{x}^{T} ] = \mathbf{I} $ where $
	\mathbf{I} $ is the identity matrix.
	
	Further, let $ \mathbf{r}^{t}_{\mathbf{x}} \triangleq \mathbf{x} - \mathbf{W}^{T}f_{e}({\mathbf{x}})$ denote the reconstruction residual during auto-encoder training at any iteration $ t $ for training sample $ \mathbf{x} $. Then we assume every dimension of $ \mathbf{r}_{\mathbf{x}}^{t} $ is \textit{i.i.d.} random variable following a Gaussian distribution with mean $ 0 $ and standard deviation $ \sigma_{r} $.
\end{assumption}

Before proceeding, first we establish an important condition needed by AEs
for exhibiting sparse behaviour. Consider the pre-activation of an AE
\begin{equation}
{a}_{j}^{t} = \mathbf{{W}}_{j}^{t}\mathbf{x} + {b}_{e_{j}}^{t}
\end{equation}
Here $ j $ and $ t $ denote the $ j^{th} $ hidden unit and $ t^{th} $ training
iteration respectively, and $ \mathbf{{W}}_{j}^{t} $ denotes the $ j^{th} $ row
of $ \mathbf{W} $. Then notice when Assumption \ref{assump_residual} is true, if
we remove the encoding bias from the AE optimization, the expected
pre-activation becomes $\mathbb{E}_{\mathbf{x}}[{a}_{j}^{t}] =
\mathbb{E}_{\mathbf{x}}[\mathbf{{W}}_{j}^{t}\mathbf{x} ] = 0 $ unconditionally
for all iterations. Consider any activation function $ s_{e}(.) $ with
activation threshold $ \delta_{\min} $, i.e. any data sample with $j^{\text{th}}$
pre-activation $ a_{j}^{t} $ would de-activate the unit if $ a_{j}^{t} <=
\delta_{\min}$ and activate it otherwise. Then the only way for a unit to
exhibit sparse behaviour (over a data distribution) when the expected
pre-activation is always zero, is for the majority of the samples to have 
pre-activation below $\delta_{\min}$. Then, in order
for the average to be zero, the minority above the threshold will have taken
larger values on average compared to the majority. However, this strategy limits
the degree of sparsity that a unit can achieve for any given data distribution
following Assumption \ref{assump_residual}, when the weight lengths are upper
bounded because the pre-activation value also become upper bounded. The bounded
weight length condition is desired in practice for convergence and is achieved
by regularizations like weight decay and Max-Norm \cite{Hinton12Dropout}. Thus,
in order for hidden units to exhibit sparse behaviour, 
encoding bias needs to be a part of AE optimization. 

Having established the importance of encoding bias, we make the following
deduction based on the above assumption,

\begin{lemma}
	\label{lemma_loss_grad_bound}
	If assumption \ref{assump_residual} is true, and encoding activation function $ s_{e}(.) $ has first derivative in $ [0,1] $, then $ {\partial \mathcal{J}_{AE}}/{\partial {b}_{e_{j}}} \in \left[ -2\sigma_{r} \sqrt{n}\lVert \mathbf{W}_{j} \rVert, 2\sigma_{r} \sqrt{n}\lVert \mathbf{W}_{j} \rVert  \right]$.
\end{lemma}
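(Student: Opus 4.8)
The plan is to compute the gradient $\partial \mathcal{J}_{AE}/\partial b_{e_j}$ in closed form and then bound its magnitude using the Gaussian residual assumption. First I would specialize the objective to the linear-decoding case used throughout this section, so that $f_d(\mathbf{h}) = \mathbf{W}^T \mathbf{h}$ and hence $\mathcal{J}_{AE} = \mathbb{E}_{\mathbf{x}}[\lVert \mathbf{r}_{\mathbf{x}} \rVert^2]$ with $\mathbf{r}_{\mathbf{x}} = \mathbf{x} - \mathbf{W}^T f_e(\mathbf{x})$, written coordinate-wise as $\sum_{k=1}^n r_k^2$ where $r_k = x_k - \sum_{j'} W_{j'k}\, s_e(a_{j'})$.

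Next I would differentiate under the expectation. The only hidden unit whose pre-activation depends on $b_{e_j}$ is the $j$-th one, so $\partial r_k / \partial b_{e_j} = -W_{jk}\, s_e'(a_j)$, and the chain rule collapses the sum over output coordinates $k$ into an inner product:
\begin{equation*}
\frac{\partial \mathcal{J}_{AE}}{\partial b_{e_j}} = -2\,\mathbb{E}_{\mathbf{x}}\!\left[ s_e'(a_j)\, \mathbf{W}_j \mathbf{r}_{\mathbf{x}} \right].
\end{equation*}

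The bounding argument then proceeds in three elementary steps. Taking absolute values and using that $s_e'(\cdot) \in [0,1]$ pointwise lets me drop the derivative factor without assuming anything about the dependence between $s_e'(a_j)$ and the residual, giving $\lvert \partial \mathcal{J}_{AE}/\partial b_{e_j} \rvert \le 2\,\mathbb{E}[\, \lvert \mathbf{W}_j \mathbf{r}_{\mathbf{x}} \rvert \,]$. A pointwise Cauchy--Schwarz inequality gives $\lvert \mathbf{W}_j \mathbf{r}_{\mathbf{x}} \rvert \le \lVert \mathbf{W}_j \rVert\, \lVert \mathbf{r}_{\mathbf{x}} \rVert$, and finally Jensen's inequality (concavity of the square root) together with the assumption that each of the $n$ residual coordinates is Gaussian with variance $\sigma_r^2$ yields $\mathbb{E}[\lVert \mathbf{r}_{\mathbf{x}} \rVert] \le \sqrt{\mathbb{E}[\lVert \mathbf{r}_{\mathbf{x}} \rVert^2]} = \sqrt{n}\,\sigma_r$. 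Chaining these bounds produces the claimed two-sided interval.

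The steps are each routine; the part needing the most care is the bookkeeping in the chain rule --- making sure that only the $j$-th summand in every residual coordinate contributes and that the sum over $k$ genuinely reassembles into $\mathbf{W}_j \mathbf{r}_{\mathbf{x}}$. The one conceptual subtlety, rather than a technical obstacle, is that $s_e'(a_j)$ and $\mathbf{r}_{\mathbf{x}}$ are in general statistically dependent; the proof sidesteps this by bounding $s_e'$ pointwise instead of attempting to factor the expectation. I would also note that only the residual-distribution part of Assumption \ref{assump_residual} is actually invoked here --- the data-moment conditions $\mathbb{E}[\mathbf{x}]=\mathbf{0}$ and $\mathbb{E}[\mathbf{x}\mathbf{x}^T]=\mathbf{I}$ are not needed for this particular bound.
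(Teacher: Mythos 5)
Your proposal is correct and follows essentially the same route as the paper's proof: compute the bias gradient in closed form, bound the activation-derivative factor pointwise using $s_e' \in [0,1]$, apply Cauchy--Schwarz to $\lvert \mathbf{W}_j \mathbf{r}_{\mathbf{x}} \rvert$, and use Jensen's inequality with the i.i.d.\ Gaussian residual assumption to get $\mathbb{E}[\lVert \mathbf{r}_{\mathbf{x}} \rVert] \le \sqrt{n}\,\sigma_r$. The only differences are cosmetic: you track the sign of the gradient explicitly (the paper writes the opposite sign but this is immaterial since the bound is two-sided), and your observation that the data-moment conditions of the assumption are not needed here is accurate.
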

Using the above result , the theorem below gives a \textit{sufficient} condition
on regularization functions needed for forcing the average pre-activation value
($ \mathbb{E}[( {a}_{j}^{t} )] $) to keep on reducing after every training iteration. 

\begin{theorem}
	\label{th_ae_reg_form}
	Let $\{ \mathbf{{W}}^{t} \in \mathbb{R}^{m \times n}, \mathbf{{b}}_{e}^{t} \in \mathbb{R}^{m}  \}$ be the parameters of a regularized auto-encoder ($ \lambda > 0 $)
	\begin{equation}
	\mathcal{J}_{RAE} = \mathcal{J}_{AE} + \lambda \mathcal{R}({\mathbf{W}},{\mathbf{b}}_{e})
	\end{equation}
	
	at training iteration $t$ with regularization term $\mathcal{R}({\mathbf{W}},{\mathbf{b}}_{e})$, activation function $s_{e}(.)$ and define pre-activation ${a}_{j}^{t} = \mathbf{{W}}_{j}^{t}\mathbf{x} + {b}_{e_{j}}^{t}$ (thus ${h}_{j}^{t} = s_{e}({a}_{j}^{t})$). \textbf{If} $\lambda \frac{\partial \mathcal{R}}{\partial {b}_{e_{j}}} > 2 \sigma_{r} \sqrt{n} \lVert \mathbf{W}_{j} \rVert $, where $j \in \{ 1,2, \hdots , m \}$, \textbf{then} updating $\{ \mathbf{{W}}^{t}, \mathbf{{b}}_{e}^{t} \}$ along the negative gradient of $\mathcal{J}_{RAE}$, results in  $\mathbb{E}_{\mathbf{x}}[{a}_{j}^{t+1}] < \mathbb{E}_{\mathbf{x}}[{a}_{j}^{t}]$ \textbf{and} $\var[{{a}_{j}^{t+1}}] = \lVert \mathbf{{W}}_{j}^{t+1} \rVert^{2}$ \textbf{for} all $ t\geq0 $.
\end{theorem}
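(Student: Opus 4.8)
The plan is to handle the two conclusions separately, since each collapses to an elementary moment computation once I understand the effect of a single gradient step on the encoding bias. The first thing I would record is that Assumption~\ref{assump_residual} fixes the first two moments of the pre-activation. Because $\mathbb{E}_{\mathbf{x}}[\mathbf{x}] = \mathbf{0}$, the weight term washes out of the mean and $\mathbb{E}_{\mathbf{x}}[a_j^t] = \mathbf{W}_j^t\,\mathbb{E}_{\mathbf{x}}[\mathbf{x}] + b_{e_j}^t = b_{e_j}^t$; and since $\var[\mathbf{x}] = \mathbb{E}_{\mathbf{x}}[\mathbf{x}\mathbf{x}^T] - \mathbb{E}_{\mathbf{x}}[\mathbf{x}]\mathbb{E}_{\mathbf{x}}[\mathbf{x}]^T = \mathbf{I}$, the variance is $\var[a_j^t] = \mathbf{W}_j^t\,\var[\mathbf{x}]\,(\mathbf{W}_j^t)^T = \lVert \mathbf{W}_j^t \rVert^2$. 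Both identities hold at every iteration, and the second one already delivers the variance conclusion $\var[a_j^{t+1}] = \lVert \mathbf{W}_j^{t+1} \rVert^2$ for whatever weight row the update produces, with no appeal to the hypothesis.

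For the mean conclusion, the identity $\mathbb{E}_{\mathbf{x}}[a_j^t] = b_{e_j}^t$ turns the target inequality $\mathbb{E}_{\mathbf{x}}[a_j^{t+1}] < \mathbb{E}_{\mathbf{x}}[a_j^t]$ into the statement $b_{e_j}^{t+1} < b_{e_j}^t$. Writing the gradient-descent update with a positive learning rate $\eta$ as $b_{e_j}^{t+1} = b_{e_j}^t - \eta\,\partial \mathcal{J}_{RAE}/\partial b_{e_j}$, this holds precisely when $\partial \mathcal{J}_{RAE}/\partial b_{e_j} > 0$, so the entire problem reduces to checking the sign of this one partial derivative.

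To establish that sign I would split $\partial \mathcal{J}_{RAE}/\partial b_{e_j} = \partial \mathcal{J}_{AE}/\partial b_{e_j} + \lambda\,\partial \mathcal{R}/\partial b_{e_j}$ and bound the two pieces from below. Lemma~\ref{lemma_loss_grad_bound} gives $\partial \mathcal{J}_{AE}/\partial b_{e_j} \geq -2\sigma_r\sqrt{n}\,\lVert \mathbf{W}_j \rVert$, while the theorem's hypothesis gives $\lambda\,\partial \mathcal{R}/\partial b_{e_j} > 2\sigma_r\sqrt{n}\,\lVert \mathbf{W}_j \rVert$; summing these cancels the $\pm 2\sigma_r\sqrt{n}\,\lVert \mathbf{W}_j \rVert$ terms and yields $\partial \mathcal{J}_{RAE}/\partial b_{e_j} > 0$, completing the mean conclusion.

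I expect essentially no obstacle in this argument itself: the genuinely load-bearing work lives in Lemma~\ref{lemma_loss_grad_bound}, and the hypothesis is evidently engineered so that the regularization gradient dominates the worst-case reconstruction gradient from that lemma. The only points needing care are to use the covariance $\mathbf{I}$ rather than the raw second moment in the variance step (they agree only because the mean vanishes), and to keep straight that the variance conclusion is an unconditional identity while the hypothesis is used solely to drive the mean downward.
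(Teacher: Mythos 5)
Your proof is correct and follows essentially the same route as the paper: both arguments use $\mathbb{E}_{\mathbf{x}}[\mathbf{x}]=\mathbf{0}$ to kill the weight-update contribution to the expected pre-activation (your identity $\mathbb{E}_{\mathbf{x}}[a_j^t]=b_{e_j}^t$ is just a cleaner packaging of the paper's step of taking expectations of the update equation), then combine the lower bound from Lemma~\ref{lemma_loss_grad_bound} with the hypothesis to get a strictly positive bias gradient, and establish the variance identity as an unconditional consequence of $\mathbb{E}_{\mathbf{x}}[\mathbf{x}\mathbf{x}^T]=\mathbf{I}$. The only presentational difference is that the paper phrases the vanishing of the $\partial/\partial\mathbf{W}_j$ terms as linearity in $\mathbf{x}$ of the expected-cost gradients, which is the same deterministic-gradient assumption you make implicitly when writing $b_{e_j}^{t+1}=b_{e_j}^t-\eta\,\partial\mathcal{J}_{RAE}/\partial b_{e_j}$.
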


\paragraph*{Interpretation:} The important thing to notice in the above theorem is that larger values of $ \lambda $ is expected to lead to lower expected pre-activation values since,
\begin{equation}
\mathbb{E}_{\mathbf{x}} \left[ {a}_{j}^{t+1} \right] = \mathbb{E}_{\mathbf{x}} \left[ {a}_{j}^{t} \right] - \eta ( \frac{\partial \mathcal{J}_{AE}}{\partial {b}_{e_{j}}} + \lambda  \frac{\partial \mathcal{R}}{\partial {b}_{e_{j}}})
\end{equation}
where $ \eta $ is the learning rate. But this may not be true in general over multiple iterations due to
terms in $ \frac{\partial \mathcal{R}}{\partial {b}_{e_{j}}} $ that depend on
weight vectors that also change every iteration depending on the value of $
\lambda $. However, we are generally interested in the direction of the weight
vectors during reconstruction instead of their scale. Thus if we fix the length
of weight vectors (to say, unit length), then the term $ \frac{\partial
	\mathcal{R}}{\partial {b}_{e_{j}}} $ will be bounded by a fixed term w.r.t. weight vectors and
will only depend on the bias and data distribution. Under these circumstances,
increasing the value of $ \lambda $ is conducive to lower expected
pre-activation if $ \frac{\partial \mathcal{R}}{\partial {b}_{e_{j}}} $ is
strictly greater than zero. On the other hand, if $ \frac{\partial
	\mathcal{R}}{\partial {b}_{e_{j}}} = 0$, then changing the value of $ \lambda $
should not have significant effect on expected pre-activation values, especially when
the weight length is fixed. In the case when the weight length is not fixed,
changing the value of $ \lambda $ will affect the value of weight length, which
in turn will affect the term $ \frac{\partial \mathcal{J}_{AE}}{\partial
	{b}_{e_{j}}} $ which also affects expected pre-activation of a unit; but this
effect is largely unpredictable depending on the form of $ \frac{\partial
	\mathcal{J}_{AE}}{\partial {b}_{e_{j}}} $. In the next section, we will connect
the notions of expected pre-activation and sparsity, for activation functions
with certain properties which will extend the above arguments to the sparsity of
hidden units.

Finally, in the relaxed cases when weight lengths are not constrained to have a
fixed length, an upper bound on weight vectors' length can easily be guaranteed
using \textit{Max-norm Regularization} or \textit{Weight Decay} which are widely
used tricks while training deep networks \cite{Hinton12Dropout}. In the prior
case every weight vector is simply constrained to lie within an $\ell_{2}$ ball
($\lVert \mathbf{W}_{j} \rVert_{2} \leq c$ $\forall j \in [m]$, where $c$ is a
fixed constant) after every gradient update.

Having shown the property of regularization functions that encourages lower
pre-activations, we now introduce two classes of regularization functions that
inherit this property and thus manifest the predictions made above.

%\hqn{The corollaries cannot hold for {\bf any} fixed $\epsilon>0$. (If we let
%   $\epsilon\to\infty$ then they hold with probabiity $1$?
%   Watch out 
%for the right $\forall \lambda, \exists \epsilon$ sort of combination.}

\begin{corollary}
	\label{cor_ae_reg_form2}
	\textbf{If} $ s_{e}$ is a non-decreasing activation function with first derivative in $ [0,1] $ \textbf{and} $\mathcal{R}= \sum_{j=1}^{m} f(\mathbb{E}_{\mathbf{x}}[{h}_{j}])$ for any monotonically increasing function $ f(.) $, \textbf{then} $ \exists \lambda>0 $ such that updating $\{ \mathbf{{W}}^{t}, \mathbf{{b}}_{e}^{t} \}$ along the negative gradient of $ \mathcal{J}_{RAE} $ results in  $\mathbb{E}_{\mathbf{x}}[{a}_{j}^{t+1}] \leq \mathbb{E}_{\mathbf{x}}[{a}_{j}^{t}]$ \textbf{and} $\var[{{a}_{j}^{t+1}}] = \lVert \mathbf{{W}}_{j}^{t+1} \rVert^{2}$ \textbf{for} all $ t\geq0 $.
\end{corollary}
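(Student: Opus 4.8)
The plan is to derive the corollary directly from Theorem~\ref{th_ae_reg_form} by computing the bias-gradient of this particular regularizer and determining its sign. First I would apply the chain rule to $\mathcal{R}=\sum_{k=1}^{m} f(\mathbb{E}_{\mathbf{x}}[h_k])$. Since only the $j$-th summand depends on $b_{e_j}$, and $h_j = s_e(a_j)$ with $a_j = \mathbf{W}_j\mathbf{x}+b_{e_j}$, I obtain
\begin{equation}
\frac{\partial \mathcal{R}}{\partial b_{e_j}} = f'\!\left(\mathbb{E}_{\mathbf{x}}[h_j]\right)\, \mathbb{E}_{\mathbf{x}}\!\left[s_e'(a_j)\right].
\end{equation}
Both factors are non-negative: $f'>0$ because $f$ is monotonically increasing, and $\mathbb{E}_{\mathbf{x}}[s_e'(a_j)]\geq 0$ because $s_e$ is non-decreasing so $s_e'\geq 0$. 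Hence $\partial \mathcal{R}/\partial b_{e_j}\geq 0$, which is exactly the sign Theorem~\ref{th_ae_reg_form} needs.

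Next I would split into two cases according to whether this derivative is strictly positive or zero. In the generic case $\mathbb{E}_{\mathbf{x}}[s_e'(a_j)]>0$ we have $\partial \mathcal{R}/\partial b_{e_j}>0$, so one can choose $\lambda$ large enough that $\lambda\,\partial \mathcal{R}/\partial b_{e_j} > 2\sigma_{r}\sqrt{n}\lVert \mathbf{W}_j\rVert$; invoking Theorem~\ref{th_ae_reg_form} then delivers the strict decrease $\mathbb{E}_{\mathbf{x}}[a_j^{t+1}] < \mathbb{E}_{\mathbf{x}}[a_j^t]$. To make a single $\lambda$ serve all iterations $t$, I would appeal to the bounded-weight-norm regime discussed immediately after the theorem (max-norm or weight decay fixes $\lVert\mathbf{W}_j\rVert$), so that the right-hand side of the condition is uniformly bounded across $t$.

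The step I expect to be the main obstacle is the boundary case $\mathbb{E}_{\mathbf{x}}[s_e'(a_j)]=0$; this is precisely why the conclusion relaxes the strict $<$ of the theorem to $\leq$. Here Theorem~\ref{th_ae_reg_form}'s hypothesis can never be met for any finite $\lambda$, so I cannot invoke it. Instead I would exploit the fact that, with linear decoding and squared loss, the loss gradient carries the same derivative factor,
\begin{equation}
\frac{\partial \mathcal{J}_{AE}}{\partial b_{e_j}} = -2\,\mathbb{E}_{\mathbf{x}}\!\left[(\mathbf{W}_j \mathbf{r}_{\mathbf{x}})\, s_e'(a_j)\right],
\end{equation}
the very expression that underlies Lemma~\ref{lemma_loss_grad_bound}. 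Since $s_e'\geq 0$, the condition $\mathbb{E}_{\mathbf{x}}[s_e'(a_j)]=0$ forces $s_e'(a_j)=0$ almost surely under $\mathbf{x}$, making \emph{both} $\partial \mathcal{R}/\partial b_{e_j}$ and $\partial \mathcal{J}_{AE}/\partial b_{e_j}$ vanish simultaneously. The update $\mathbb{E}_{\mathbf{x}}[a_j^{t+1}] = \mathbb{E}_{\mathbf{x}}[a_j^t] - \eta(\partial \mathcal{J}_{AE}/\partial b_{e_j} + \lambda\,\partial \mathcal{R}/\partial b_{e_j})$ then leaves the expected pre-activation unchanged, giving equality. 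Combining the two cases yields $\mathbb{E}_{\mathbf{x}}[a_j^{t+1}] \leq \mathbb{E}_{\mathbf{x}}[a_j^t]$ for every $j$ and all $t\geq 0$. Finally, the variance claim is immediate and independent of the update: by Assumption~\ref{assump_residual}, $\var[a_j^{t+1}] = \mathbf{W}_j^{t+1}\,\mathbb{E}_{\mathbf{x}}[\mathbf{x}\mathbf{x}^T]\,(\mathbf{W}_j^{t+1})^T = \lVert \mathbf{W}_j^{t+1}\rVert^2$ since $\mathbb{E}_{\mathbf{x}}[\mathbf{x}]=\mathbf{0}$ and $\mathbb{E}_{\mathbf{x}}[\mathbf{x}\mathbf{x}^T]=\mathbf{I}$.
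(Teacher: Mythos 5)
Your proposal is correct and takes essentially the same approach as the paper's proof: apply the chain rule to get $\frac{\partial \mathcal{R}}{\partial b_{e_j}} = f'\!\left(\mathbb{E}_{\mathbf{x}}[h_j]\right)\mathbb{E}_{\mathbf{x}}\!\left[\frac{\partial h_j}{\partial a_j}\right]$, conclude non-negativity from the monotonicity of $f$ and $s_e$, and feed this into Theorem~\ref{th_ae_reg_form}. Your explicit handling of the degenerate case $\mathbb{E}_{\mathbf{x}}[s_e'(a_j)]=0$ (where $s_e'(a_j)=0$ almost surely forces the loss bias-gradient to vanish along with the regularization term, yielding equality) and of the variance identity fills in details that the paper's one-line proof leaves implicit.
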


\begin{corollary}
	\label{cor_ae_reg_form1}
	\textbf{If} $ s_{e} $ is a non-decreasing convex activation function with first derivative in $ [0,1] $ \textbf{and} $\mathcal{R}= \mathbb{E}_{\mathbf{x}} \left[  \sum_{j=1}^{m} \left( \left(\frac{\partial {h}_{j}}{\partial {a}_{j}}\right)^{q}\lVert \mathbf{{W}}_{j}^{t}\rVert_{2}^{p} \right) \right]$, $q \in \mathbb{N}$ , $p \in \mathbb{W}$, \textbf{then} $ \exists \lambda>0 $ such that updating $\{ \mathbf{{W}}^{t}, \mathbf{{b}}_{e}^{t} \}$ along the negative gradient of $ \mathcal{J}_{RAE} $, results in  $\mathbb{E}_{\mathbf{x}}[{a}_{j}^{t+1}] \leq \mathbb{E}_{\mathbf{x}}[{a}_{j}^{t}]$ \textbf{and} $\var[{{a}_{j}^{t+1}}] =  \lVert \mathbf{{W}}_{j}^{t+1} \rVert^{2}$ \textbf{for} all $ t\geq0 $.
\end{corollary}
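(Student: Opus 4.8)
The plan is to read this as a direct specialization of Theorem~\ref{th_ae_reg_form}. That theorem already delivers both conclusions once its hypothesis $\lambda\,\partial\mathcal{R}/\partial b_{e_j} > 2\sigma_r\sqrt{n}\lVert\mathbf{W}_j\rVert$ is met, so all I really have to establish is that this particular regularizer pushes the bias gradient in the sparsifying direction, i.e. $\partial\mathcal{R}/\partial b_{e_j}\geq 0$, and is in fact strictly positive often enough that a suitable $\lambda>0$ exists. The entire content of the corollary therefore reduces to one derivative computation and a sign analysis that invokes convexity of $s_e$.

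First I would differentiate $\mathcal{R}$ with respect to $b_{e_j}$. The factor $\lVert\mathbf{W}_j\rVert_2^p$ does not depend on $b_{e_j}$, and since $\partial a_j/\partial b_{e_j}=1$, differentiating $(s_e'(a_j))^q$ through the chain rule gives
\begin{equation}
\frac{\partial\mathcal{R}}{\partial b_{e_j}} = q\,\lVert\mathbf{W}_j\rVert_2^p\;\mathbb{E}_{\mathbf{x}}\!\left[(s_e'(a_j))^{q-1}\,s_e''(a_j)\right].
\end{equation}
The sign analysis is where the hypotheses enter: $s_e$ non-decreasing gives $s_e'\geq 0$, so $(s_e'(a_j))^{q-1}\geq 0$ because $q\in\mathbb{N}$ forces $q-1\geq 0$; $s_e$ convex gives $s_e''\geq 0$; and $q\geq 1$, $\lVert\mathbf{W}_j\rVert_2^p\geq 0$. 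Every factor is non-negative, hence $\partial\mathcal{R}/\partial b_{e_j}\geq 0$. I would then present the same conclusion in a form that survives non-smooth activations such as the rectified linear unit, where $s_e''=0$ almost everywhere and the pointwise formula is misleading: convexity of $s_e$ makes $s_e'$ non-decreasing, so raising the bias shifts every pre-activation upward and can only increase $(s_e'(a_j))^q$; thus $\mathbb{E}_{\mathbf{x}}[(s_e'(a_j))^q]$ is a non-decreasing function of $b_{e_j}$ and its derivative is $\geq 0$ irrespective of smoothness, and strictly positive whenever the pre-activation density is positive at the activation threshold.

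With the sign settled I would close by quoting Theorem~\ref{th_ae_reg_form}. The bias update is $\mathbb{E}_{\mathbf{x}}[a_j^{t+1}] = \mathbb{E}_{\mathbf{x}}[a_j^t] - \eta(\partial\mathcal{J}_{AE}/\partial b_{e_j} + \lambda\,\partial\mathcal{R}/\partial b_{e_j})$, and Lemma~\ref{lemma_loss_grad_bound} bounds the reconstruction term in magnitude by $2\sigma_r\sqrt{n}\lVert\mathbf{W}_j\rVert$. At any iteration where $\partial\mathcal{R}/\partial b_{e_j}>0$, choosing $\lambda$ large enough to enforce $\lambda\,\partial\mathcal{R}/\partial b_{e_j} > 2\sigma_r\sqrt{n}\lVert\mathbf{W}_j\rVert$ makes the total bias gradient positive and yields the strict decrease; where the regularizer gradient is zero the step cannot be forced downward and one only retains equality, so the combined statement is the non-strict $\mathbb{E}_{\mathbf{x}}[a_j^{t+1}]\leq\mathbb{E}_{\mathbf{x}}[a_j^t]$. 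The variance identity is inherited unchanged from the theorem: under Assumption~\ref{assump_residual} the bias cancels in the variance and $\var[a_j^{t+1}] = \mathbf{W}_j^{t+1}(\mathbf{W}_j^{t+1})^{T} = \lVert\mathbf{W}_j^{t+1}\rVert^2$.

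The step I expect to fight hardest is securing a \emph{single} $\lambda$ valid for all $t\geq 0$, since both $\lVert\mathbf{W}_j\rVert$ and $\partial\mathcal{R}/\partial b_{e_j}$ drift from iteration to iteration and the latter can collapse toward zero once the pre-activations are driven deep into the flat region of $s_e'$. This is exactly why the conclusion is stated with $\leq$ rather than the theorem's $<$, and why I would invoke the bounded-weight-length regime discussed after the theorem (max-norm, or fixing $\lVert\mathbf{W}_j\rVert$ to unit length) to keep the threshold $2\sigma_r\sqrt{n}\lVert\mathbf{W}_j\rVert$ controlled so that a uniform choice of $\lambda$ is defensible.
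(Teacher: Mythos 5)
Your proposal is correct and takes essentially the same route as the paper: the identical chain-rule computation $\partial\mathcal{R}/\partial b_{e_j} = \mathbb{E}_{\mathbf{x}}\bigl[ q (\partial h_j/\partial a_j)^{q-1} (\partial^2 h_j/\partial a_j^2) \lVert \mathbf{W}_j^t \rVert_2^p \bigr]$, the same sign analysis from monotonicity ($s_e' \geq 0$) and convexity ($s_e'' \geq 0$), and then an appeal to Theorem~\ref{th_ae_reg_form}. Your supplementary remarks --- handling non-smooth activations like ReLU via monotonicity of $s_e'$ rather than the pointwise second derivative, and worrying about a single $\lambda$ being valid across all iterations --- go beyond what the paper's (very terse) proof records, and in fact confront a looseness the paper leaves silent, but they do not change the core argument.
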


Above corollaries show that specific regularizations encourage the
pre-activation of every hidden unit in AEs to reduce on average, with
assumptions made only on activation function and the first/second order
statistics of the data distribution. We will show in Section~\ref{sec_part3}
that multiple existing AEs have regularizations of the form above.

\subsubsection{Which activation functions are \textit{good} for Sparse Representation?}
%\label{sec_act_fn}

The above analysis in general suggests that non-decreasing convex activation functions encourage lower expected pre-activation for regularization in both corollaries. Also note that a reduction in the expected pre-activation value ($ \mathbb{E}[( {a}_{j}^{t} )] $) does not necessarily imply a reduction in the hidden unit value ($  {h}_{j}^{t}  $) and thus sparsity. However, these regularizations become immediately useful if we consider non-decreasing activation functions with negative saturation at $ 0 $, \textit{i.e.}, $ \lim_{a \rightarrow - \infty} s_{e}(a) = 0$. Now a lower average pre-activation value directly implies higher sparsity!

Before proceeding, we would like to mention that although the general notion of
sparsity in AEs entails majority of units are de-activated, \textit{i.e.}, their value is less than a certain threshold
($ \delta_{\min} $), in practice, a representation that is \textit{truly sparse}
(large number of hard zeros) usually yields better performance
\cite{sparse_rectifier_net,src,sr_img_classification}. 
Extending the argument of theorem \ref{th_ae_reg_form}, we obtain:

%\hqn{Again it can't hold for {\bf any} fixed $\epsilon>0$. The statement does
%not make sense. Also, for any fixed $T_{\max}$ also doesn't make sense.}

\begin{theorem}
	\label{th_activation}
	Let $ p_{j}^{t}$ denote a lower bound of $
		\prob({h}_{j}^{t}\leq \delta_{\min}) $ at iteration $ t $ and $ s_{e}(.) $ be a non-decreasing function with first derivative in $ [0,1] $. \textbf{If} $ \lVert \mathbf{W}_{j}^{t} \rVert_{2} $ is upper bounded independent of $ \lambda $ \textbf{then} $ \exists S \subseteq \mathbb{R}^{+} $ \textbf{and} $ \exists T_{\min},T_{\max} \in \mathbb{N}$ \textbf{such that} $ p_{j}^{t+1}\geq p_{j}^{t} $ $ \forall \lambda \in S $, $T_{\min} \leq t \leq T_{\max} $.
\end{theorem}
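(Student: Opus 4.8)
The plan is to convert the claim about the hidden unit $h_j^t$ into a claim about the pre-activation $a_j^t$, to manufacture an explicit lower bound $p_j^t$ on $\prob(h_j^t \le \delta_{\min})$ from the first two moments of $a_j^t$ via a Chebyshev-type (Cantelli) tail inequality, and then to show this explicit bound is non-decreasing on a suitable window of iterations and range of $\lambda$ by feeding it the mean-decrease and variance identity from Theorem \ref{th_ae_reg_form}. First I would use that $s_e$ is non-decreasing to rewrite the event: setting $\tilde\delta = \sup\{a : s_e(a) \le \delta_{\min}\}$ we have $\{h_j^t \le \delta_{\min}\} = \{a_j^t \le \tilde\delta\}$, so $\prob(h_j^t \le \delta_{\min}) = \prob(a_j^t \le \tilde\delta)$. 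Under Assumption \ref{assump_residual} the pre-activation has mean $\mathbb{E}_{\mathbf{x}}[a_j^t] = b_{e_j}^t$ and, by the variance identity in Theorem \ref{th_ae_reg_form}, variance $\lVert \mathbf{W}_j^t \rVert^2$. Cantelli's inequality applied to the upper tail then yields the explicit lower bound
\[
p_j^t = \frac{(\tilde\delta - b_{e_j}^t)^2}{\lVert \mathbf{W}_j^t \rVert^2 + (\tilde\delta - b_{e_j}^t)^2},
\]
which is valid precisely when $\tilde\delta > b_{e_j}^t$.

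Next I would reduce the target $p_j^{t+1} \ge p_j^t$ to a one-line monotonicity condition. Writing $\Delta_j^t = \tilde\delta - b_{e_j}^t$ and clearing the (positive) denominators, $p_j^{t+1} \ge p_j^t$ is equivalent to $\Delta_j^{t+1}/\lVert \mathbf{W}_j^{t+1} \rVert \ge \Delta_j^t/\lVert \mathbf{W}_j^t \rVert$, i.e. to the ``signal-to-noise'' ratio $\Delta_j^t/\lVert \mathbf{W}_j^t \rVert$ being non-decreasing. Theorem \ref{th_ae_reg_form} controls the numerator: when $\lambda$ is large enough that its hypothesis $\lambda\,\partial\mathcal{R}/\partial b_{e_j} > 2\sigma_r\sqrt{n}\lVert \mathbf{W}_j^t \rVert$ holds at each iteration of the window, $b_{e_j}^t$ strictly decreases, so $\Delta_j^t$ strictly increases; moreover from the update $\mathbb{E}_{\mathbf{x}}[a_j^{t+1}] = \mathbb{E}_{\mathbf{x}}[a_j^t] - \eta(\partial\mathcal{J}_{AE}/\partial b_{e_j} + \lambda\,\partial\mathcal{R}/\partial b_{e_j})$ together with Lemma \ref{lemma_loss_grad_bound}, the per-step increment of $\Delta_j^t$ is at least $\eta(\lambda\,\partial\mathcal{R}/\partial b_{e_j} - 2\sigma_r\sqrt{n}\,c)$, which grows with $\lambda$. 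Here $c$ is the $\lambda$-independent upper bound on $\lVert \mathbf{W}_j^t \rVert$ from the hypothesis, which simultaneously caps the denominator.

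I would then define the window and the admissible set. Take $T_{\min}$ to be the first iteration at which the (steadily decreasing) mean crosses below $\tilde\delta$, so that $\Delta_j^t > 0$ and $p_j^t$ is a genuine lower bound for all $t \ge T_{\min}$; such an index exists because $b_{e_j}^t$ drops by a fixed positive amount per step. Let $S$ collect the values of $\lambda$ large enough both to keep the hypothesis of Theorem \ref{th_ae_reg_form} valid at every iteration of the window (using $\lVert \mathbf{W}_j^t \rVert \le c$ uniformly) and to make the guaranteed per-step growth of $\Delta_j^t$ exceed the variation of the bounded denominator $\lVert \mathbf{W}_j^t \rVert$ over the window; finiteness of the window $t \le T_{\max}$ is exactly what lets a fixed, $\lambda$-independent denominator be outpaced by the numerator's cumulative growth. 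Collecting these facts gives $\Delta_j^{t+1}/\lVert \mathbf{W}_j^{t+1} \rVert \ge \Delta_j^t/\lVert \mathbf{W}_j^t \rVert$, hence $p_j^{t+1} \ge p_j^t$, for every $\lambda \in S$ and $T_{\min} \le t \le T_{\max}$.

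The main obstacle is controlling the denominator $\lVert \mathbf{W}_j^t \rVert$ against the guaranteed numerator growth. The gap $\Delta_j^t$ provably increases (and faster for larger $\lambda$), but the weight norm evolves under gradient descent in a way coupled to both $\lambda$ and the data, so a single upward jump in $\lVert \mathbf{W}_j^t \rVert$ --- or saturation of $p_j^t$ near $1$ --- could momentarily reverse the ratio. This is precisely why the hypothesis pins $\lVert \mathbf{W}_j^t \rVert$ to a bound independent of $\lambda$ (so that raising $\lambda$ inflates the numerator without inflating the variance) and why the conclusion is confined to a finite window rather than asserted for all $t$. The cleanest instantiation is max-norm or unit-length weights, for which $\lVert \mathbf{W}_j^t \rVert$ is constant over the window and the monotonicity condition collapses to the monotone decrease of $b_{e_j}^t$ already supplied by Theorem \ref{th_ae_reg_form}.
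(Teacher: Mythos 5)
Your proposal is correct and takes essentially the same route as the paper's proof: translate $\{h_j^t\le\delta_{\min}\}$ into a pre-activation threshold event, lower-bound $\prob(a_j^t\le\tilde\delta)$ by a moment-based tail inequality built from the mean decrease and variance identity of Theorem \ref{th_ae_reg_form}, take $T_{\min}$ as the iteration where the expected pre-activation crosses below the threshold, and choose $S$ and $T_{\max}$ so that the bound is monotone over the window. The only difference is cosmetic: you invoke Cantelli's one-sided inequality where the paper uses the two-sided Chebyshev bound $p_j^t = 1 - \var[{a}_{j}^{t}]/({a}_{\min} - \mathbb{E}[{a}_{j}^{t}])^{2}$, but both reduce $p_j^{t+1}\ge p_j^t$ to the same signal-to-noise monotonicity condition, and your closing existence argument for $S$ and $T_{\max}$ matches (indeed slightly elaborates) the paper's.
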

The above theorem formally connects the notions of expected pre-activation and expected sparsity of a hidden unit. Specifically, it shows that the usage of non-decreasing activation functions lead to lower expected pre-activation and thus a higher probability of de-activated hidden units when theorem \ref{th_ae_reg_form} applies. This result coupled with the property  $ \lim_{a \rightarrow - \infty} s_{e}(a) = 0$ (de-activated state) implies the average sparsity of hidden units keeps increasing after a sufficient number of iterations ($ T_{\min} $) for such activations. %On the other hand, the condition $ \lVert \mathbf{W}_{j}^{t} \rVert_{2} $ be upper bounded is a straightforward result of applying common tricks like \textit{Max-norm Regularization} or \textit{Weight Decay} and hence does not undermine our result. 
Notice that convexity in $ s_{e}(.) $ is only desired for regularizations in corollary \ref{cor_ae_reg_form1}. Thus in summary, non-decreasing convex $ s_{e}(.) $ ensure ${\partial \mathcal{R}}/{\partial {b}_{e_{j}}} $ is positive for regularizations in corollary \ref{cor_ae_reg_form2} and \ref{cor_ae_reg_form1}, which in turn encourages low expected pre-activation for suitable values of $ \lambda $. This finally leads to higher sparsity if $ \lim_{a \rightarrow - \infty} s_{e}(a) = 0$.

Notice we derive the strict inequality ($\mathbb{E}_{\mathbf{x}}[{a}_{j}^{t+1}]
< \mathbb{E}_{\mathbf{x}}[{a}_{j}^{t}]$) in Theorem~\ref{th_ae_reg_form} (and used in Theorem~\ref{th_activation}) even though the corollaries suggest
non-decreasing convex activations imply the relaxed case
($\mathbb{E}_{\mathbf{x}}[{a}_{j}^{t+1}] \leq
\mathbb{E}_{\mathbf{x}}[{a}_{j}^{t}]$). This is done for two reasons: a) ensure
sparsity monotonically increases for iterations $T_{\min} \leq t\leq T_{\max} $,
b) the condition ${\partial \mathcal{R}}/{\partial {b}_{e_{j}}} = 0$ (which
results in $\mathbb{E}_{\mathbf{x}}[{a}_{j}^{t+1}] \leq
\mathbb{E}_{\mathbf{x}}[{a}_{j}^{t}]$) is unlikely for activations with non-zero
first/second derivatives because the term $ \mathcal{R} $ (above corollaries)
depends on the entire data distribution.

The most popular choice of activation functions are ReLU,
Maxout\cite{Goodfellow13maxout}, Sigmoid, Tanh and Softplus. Maxout and Tanh are
not applicable to our framework as they do not satisfy the negative saturation
property. 

\textbf{ReLU: }It is a non-decreasing convex function; thus both corollary \ref{cor_ae_reg_form2} and \ref{cor_ae_reg_form1} apply. Note 
ReLU does not have a second derivative\footnote{\scriptsize In other words, $ \partial^{2} {h}_{j}/\partial {a}_{j}^{2} = \delta(\mathbf{{W}}_{j}\mathbf{x} + {b}_{e_{j}}) $, where $ \delta(.) $ is the Dirac delta function.  Although strictly speaking, $ \partial^{2} {h}_{j}/\partial {a}_{j}^{2} $ is always non-negative, this value is zero everywhere except when the argument is exactly $ 0 $, in which case it is $ +\infty $}. Thus, in practice, this may lead to poor sparsity for the regularization in Corollary~\ref{cor_ae_reg_form1} due to lack of bias gradients from the regularization, i.e. ${\partial \mathcal{R}}/{\partial {b}_{e_{j}}}=0$. On the flip side, the advantage of ReLU is that it enforces hard zeros in the learned representations.

\textbf{Softplus: }It is a non-decreasing convex function and hence encourages sparsity for the suggested AE regularizations. In contrast to ReLU, Softplus has positive bias gradients (hence better sparsity for corollary \ref{cor_ae_reg_form1}) because of its smoothness. On the other hand, note that Softplus does not produce hard zeros due to asymptotic left saturation at $ 0 $.
%Thus although Softplus would lead to
%large number of de-activated units (depending on the value of $ \delta_{\min} $) for the aforementioned forms of regularizations, it cannot produce \textit{truly sparse} codes.

\textbf{Sigmoid: }Corollary \ref{cor_ae_reg_form2} applies unconditionally to Sigmoid, while corollary \ref{cor_ae_reg_form1} doesn't apply in general. Hence Sigmoid is not guaranteed to lead to sparsity when used with regularizations of form specified in Corollary~\ref{cor_ae_reg_form1}.

Notice all the above activation functions have their first derivative in $ [0,1] $ (a condition required by lemma \ref{lemma_loss_grad_bound}). In conclusion, Maxout and Tanh do not satisfy the negative saturation property at $ 0 $ and hence do not guarantee sparsity, all others-- ReLU, Softplus and Sigmoid-- have properties (at least in principle) that encourage sparsity in learned representations for the suggested regularizations.

\subsection{Part II: Do existing Auto-Encoders learn Sparse Representation?}
\label{sec_part3}
%So far we found that certain forms of regularization terms (theorem \ref{th_ae_reg_form}) enforce sparsity in the hidden representation and that some of the popular choices of activation functions satisfy the condition required by theorem \ref{th_activation} and are thus capable of enforcing sparsity if AEs have the proposed regularization form.
At this point, a natural question to ask is whether existing AEs learn Sparse
Representation. To complete the loop, we show that most of the popular AE
objectives have regularization term similar to what we have proposed in
Corollaries \ref{cor_ae_reg_form2} and \ref{cor_ae_reg_form1} and thus they
indeed learn sparse representation.

\paragraph*{1) De-noising Auto-Encoder (DAE):} DAE
\cite{Vincent:2008:ECR:1390156.1390294} aims at minimizing the reconstruction
error between every sample $\mathbf{x}$ and the reconstructed vector using
its corresponding corrupted version $\tilde{\mathbf{x}}$. The corrupted
version $\tilde{\mathbf{x}}$ is sampled from a conditional distribution
$p(\tilde{\mathbf{x}}_{i}|\mathbf{x}_{i})$. The original DAE objective is
given by
\begin{equation}
\mathcal{J}_{DAE} =  \mathbb{E}_{\mathbf{x}} \left[ \mathbb{E}_{p(\mathbf{\tilde{x}}|\mathbf{x})}[\ell(\mathbf{x},f_{d}(f_{e}(\tilde{\mathbf{x}})))] \right]
\end{equation}
where $p(\mathbf{\tilde{x}}_{i}|\mathbf{x})$ denotes the conditional distribution of $\mathbf{\tilde{x}}$ given $\mathbf{x}$. Since the above objective is analytically intractable due to the corruption process, we take a second order Taylor's approximation of the DAE objective
%\footnote{\scriptsize As a note, even though we derive this equivalent form of DAE, we use explicit corruption for experiments.} 
around the distribution mean $\mu_{\mathbf{x}} = \mathbb{E}_{p(\mathbf{\tilde{x}}|\mathbf{x})}[\mathbf{\tilde{x}}]$ in order to overcome this difficulty,

\begin{theorem}
	\label{th_dae}
	Let $\{ \mathbf{{W}},\mathbf{{b}}_{e}\}$ represent the parameters of a DAE with squared loss, linear decoding, and i.i.d. Gaussian corruption with zero mean and $\sigma^{2}$ variance, at any point of training over data sampled from distribution $\mathcal{D}$. Let ${a}_{j} := \mathbf{{W}}_{j}\mathbf{x} + {b}_{e_{j}}$ so that ${h}_{j} = s_{e}({a}_{j})$ corresponding to sample $\mathbf{x} \sim \mathcal{D}$.
	%and define $\mu_{i j} = {\mathbf{{W}}_{i}^{T}\mathbf{{W}}_{j}}/{\lVert \mathbf{{W}}_{i} \rVert_{2}\lVert \mathbf{{W}}_{j} \rVert_{2}}$
	Then,
	\begin{equation}
	\begin{split}
	\label{eq_dae_theorem}
	\mathcal{J}_{DAE} = \mathcal{J}_{AE} + 
	\sigma^{2} \mathbb{E}_{\mathbf{x}} \left[  \sum_{j=1}^{m} \left( \left(\frac{\partial {h}_{j}}{\partial {a}_{j}}\right)^{2}\lVert \mathbf{{W}}_{j} \rVert_{2}^{4} \right) \right.\\
	\left.+ \sum_{\substack{j,k=1 \\ j\neq k}}^{m}\left( \frac{\partial {h}_{j}}{\partial {a}_{j}} \frac{\partial {h}_{k}}{\partial {a}_{k}} (\mathbf{{W}}_{j}^{T} \mathbf{{W}}_{k})^{2} \right)    \right. \\
	\left. +  \sum_{i=1}^{n}\left(  ({\mathbf{b}_{d}+\mathbf{W}}^{T}{\mathbf{h}} - \mathbf{x})^{T}{\mathbf{W}}^{T} \left( \frac{\partial^{2} {\mathbf{h}} }{\partial {\mathbf{a}}^{2}} \odot {\mathbf{W}}^{i} \odot{\mathbf{W}}^{i} \right) \right) \right]  \\
	+ o(\sigma^{2})
	\end{split}
	\end{equation}
	where $ \frac{\partial^{2} {\mathbf{h}} }{\partial {\mathbf{a}}^{2}}  \in \mathbb{R}^{m}$ is the element-wise $ 2^{nd} $ derivative of $ {\mathbf{h}} $ \textit{w.r.t.} $ {\mathbf{a}} $ and $ \odot $ is element-wise product.
\end{theorem}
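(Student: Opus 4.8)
The plan is to treat the inner expectation over the corruption as a local smoothing operation and expand it to second order in the noise scale. Writing the corrupted input as $\tilde{\mathbf{x}} = \mathbf{x} + \boldsymbol{\epsilon}$ with $\boldsymbol{\epsilon} \sim \mathcal{N}(\mathbf{0},\sigma^{2}\mathbf{I})$, the corruption mean is $\mu_{\mathbf{x}} = \mathbf{x}$, so I would Taylor-expand the per-sample loss $g(\tilde{\mathbf{x}}) := \ell(\mathbf{x}, f_{d}(f_{e}(\tilde{\mathbf{x}})))$ as a function of $\tilde{\mathbf{x}}$ about the clean point $\tilde{\mathbf{x}} = \mathbf{x}$, holding the reconstruction target $\mathbf{x}$ fixed. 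Taking the conditional expectation $\mathbb{E}_{p(\tilde{\mathbf{x}}|\mathbf{x})}[\cdot]$ then annihilates the linear term (since $\mathbb{E}[\boldsymbol{\epsilon}] = \mathbf{0}$) and converts the quadratic term into $\tfrac{\sigma^{2}}{2}\tr(\mathbf{H})$, where $\mathbf{H}$ is the Hessian of $g$ in $\tilde{\mathbf{x}}$ evaluated at $\mathbf{x}$, because $\mathbb{E}[\boldsymbol{\epsilon}\boldsymbol{\epsilon}^{T}] = \sigma^{2}\mathbf{I}$. After the outer expectation $\mathbb{E}_{\mathbf{x}}$, the zeroth-order term is exactly $\mathcal{J}_{AE}$, and the entire correction reduces to $\tfrac{\sigma^{2}}{2}\mathbb{E}_{\mathbf{x}}[\tr(\mathbf{H})]$ plus a remainder, so everything hinges on computing $\tr(\mathbf{H})$.

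For the core computation, set $\mathbf{y} = \mathbf{b}_{d} + \mathbf{W}^{T}\mathbf{h}$, $h_{j} = s_{e}(a_{j})$, $a_{j} = \mathbf{W}_{j}\tilde{\mathbf{x}} + b_{e_{j}}$, and let $\partial_{k}$ abbreviate $\partial/\partial\tilde{x}_{k}$. Differentiating $g = \lVert \mathbf{y}-\mathbf{x}\rVert^{2}$ twice produces two families of terms: a first-derivative part $2\sum_{k}\sum_{i}(\partial_{k}y_{i})^{2}$ and a residual-weighted part $2\sum_{k}\sum_{i}(y_{i}-x_{i})\,\partial_{k}^{2}y_{i}$. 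Using $\partial_{k}y_{i} = \sum_{j}W_{ji}(\partial h_{j}/\partial a_{j})W_{jk}$ and contracting over the input coordinate $k$ via $\sum_{k}W_{jk}W_{lk} = \mathbf{W}_{j}^{T}\mathbf{W}_{l}$, the first family collapses to $2\sum_{j,l}(\partial h_{j}/\partial a_{j})(\partial h_{l}/\partial a_{l})(\mathbf{W}_{j}^{T}\mathbf{W}_{l})^{2}$; splitting into $j=l$ and $j\neq l$ yields precisely the $\lVert\mathbf{W}_{j}\rVert_{2}^{4}$ diagonal term and the $(\mathbf{W}_{j}^{T}\mathbf{W}_{k})^{2}$ off-diagonal term of \eqref{eq_dae_theorem}. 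For the second family, $\partial_{k}^{2}y_{i} = \sum_{j}W_{ji}(\partial^{2}h_{j}/\partial a_{j}^{2})W_{jk}^{2}$, and summing over $k$ introduces the factor $\sum_{k}W_{jk}^{2} = \lVert\mathbf{W}_{j}\rVert_{2}^{2}$; reorganizing the resulting triple sum into element-wise products recovers the third term built from $(\partial^{2}\mathbf{h}/\partial\mathbf{a}^{2})\odot\mathbf{W}^{i}\odot\mathbf{W}^{i}$. Multiplying both families by the $\tfrac{\sigma^{2}}{2}$ prefactor cancels the $2$ and matches the $\sigma^{2}$-scaled terms in the statement.

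Finally, I would justify the $o(\sigma^{2})$ remainder: the third-order Taylor term is odd in $\boldsymbol{\epsilon}$ and vanishes in expectation because all odd Gaussian moments are zero, so the leading surviving correction is fourth order in $\boldsymbol{\epsilon}$, i.e. $O(\sigma^{4}) = o(\sigma^{2})$, provided $s_{e}$ is smooth enough that the relevant derivatives are bounded near $\mathbf{x}$. The main obstacle I anticipate is purely bookkeeping rather than conceptual: executing the two-fold chain rule while keeping the encoder index $j$, the decoder/input index $i$, and the differentiation coordinate $k$ distinct, and then performing the contractions ($\sum_{i}W_{ji}W_{li} = \mathbf{W}_{j}^{T}\mathbf{W}_{l}$ and $\sum_{k}W_{jk}^{2} = \lVert\mathbf{W}_{j}\rVert_{2}^{2}$) in the correct order so that the compact Hadamard form of the third term emerges instead of a raw triple sum. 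One must also remember that the clean target $\mathbf{x}$ is held fixed under $\partial_{\tilde{\mathbf{x}}}$, so only the encoder argument contributes derivatives.
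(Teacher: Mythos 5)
Your proposal is correct and follows essentially the same route as the paper's proof: a second-order Taylor expansion of the loss about the corruption mean $\mu_{\mathbf{x}}=\mathbf{x}$, Gaussian moments killing the linear term and reducing the correction to $\tfrac{\sigma^{2}}{2}\mathbb{E}_{\mathbf{x}}[\tr(\mathbf{H})]$, and a split of the Hessian trace into a Gauss--Newton part (giving the $\lVert\mathbf{W}_{j}\rVert_{2}^{4}$ and $(\mathbf{W}_{j}^{T}\mathbf{W}_{k})^{2}$ terms) plus a residual-weighted part (giving the third term). The only differences are bookkeeping --- you contract indices directly in coordinates where the paper packages the Gauss--Newton term as $\lVert(\mathbf{D}_{\mathbf{h}}\mathbf{W})^{T}\mathbf{W}\rVert_{F}^{2}$ and invokes trace cyclicity --- and your explicit justification of the $o(\sigma^{2})$ remainder via vanishing odd Gaussian moments is, if anything, more careful than the paper's.
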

The first term of the above regularization is of the form stated in corollary \ref{cor_ae_reg_form1}. Even though the second term doesn't have the exact suggested form, it is straight forward to see that this term generates non-negative bias gradients for non-decreasing convex activation functions (and should have behaviour similar to that predicted in corollary \ref{cor_ae_reg_form1}). Note the last term depends on the reconstruction error which practically becomes small after a few epochs of training and the other two regularization terms take over. Besides, this term is usually ignored as it is not positive-definite. This suggests that DAE is capable of learning sparse representation.

%Further, the second term in the DAE regularization aims at minimizing the pair-wise angle between every two weight vectors ($\mu_{ij}$). This suggests that similar to dropout algorithm \cite{Hinton12Dropout}, DAE is capable of avoiding co-adaptation between its weight vectors.

\paragraph*{2) Contractive Auto-Encoder (CAE):} CAE \cite{Rifaicae} objective is given by
\begin{equation}
\mathcal{J}_{CAE} = \mathcal{J}_{AE} + \lambda \mathbb{E}_{\mathbf{x}} \left[  \lVert {J}(\mathbf{x}) \rVert_{F}^{2} \right]
\end{equation}
where ${J}(\mathbf{x}) =  \frac{\partial {{\mathbf{h}}}}{\partial {{\mathbf{x}}}} $ denotes the Jacobian matrix and the objective aims at minimizing the sensitivity of the hidden representation to slight changes in input. 
\begin{remark}
	Let $\{ \mathbf{{W}},\mathbf{{b}}_{e} \}$ represent the parameters of a CAE with regularization coefficient $\lambda$, at any point of training over data sampled from some distribution $\mathcal{D}$. Then,
	\begin{equation}
	\label{eq_cae_theorem}
	\mathcal{J}_{CAE} = \mathcal{J}_{AE} + \lambda \mathbb{E}_{\mathbf{x}} \left[  \sum_{j=1}^{m} \left( \left(\frac{\partial {h}_{j}}{\partial {a}_{j}}\right)^{2}    \lVert \mathbf{{W}}_{j} \rVert_{2}^{2} \right) \right]
	\end{equation}
\end{remark}
Thus CAE regularization also has a form identical to the form suggested in corollary \ref{cor_ae_reg_form1}. Thus the hidden representation learned by CAE should also be sparse. In addition, since the first order regularization term in Higher order CAE (CAE+H) \cite{hcae} is the same as CAE, this suggests that CAE+H objective should have similar properties in term of sparsity. 
%However, because CAE does not have any regularization term that can avoid co-adaptation between its weight vectors, CAE may learn similar weight vectors. Thus even though the representation itself is sparse, contrary to the contemporary view, our analysis suggests that the weight vectors learned by CAE may not be as robust as those learned by DAE because of co-adaptation. 

\paragraph*{3) Marginalized De-noising Auto-Encoder (mDAE):} mDAE \cite{icml2014c2_cheng14} objective is given by:
\begin{equation}
\label{eq_mdae}
\begin{split}
\mathcal{J}_{mDAE} = \mathcal{J}_{AE}
+ \frac{1}{2} \mathbb{E}_{\mathbf{x}} \left[ \sum_{i=1}^{n} \sigma_{\mathbf{x}i}^{2} \sum_{j=1}^{m}  \frac{\partial^{2} \ell}{\partial {{{h}_{j}}^{2}}} \left( \frac{\partial {{h}_{j}}}{\partial \tilde{{x}}_{i}} \right)^{2} \right]
\end{split}
\end{equation}
where $\sigma_{\mathbf{x}i}^{2}$ denotes the corruption variance intended for the $i^{th}$ input dimension. The authors of mDAE proposed this algorithm with the primary goal of speeding up the training of DAE by deriving an approximate form that omits the need to iterate over a large number of explicitly corrupted instances of every training sample.
\begin{remark}
	Let $\{ \mathbf{{W}},\mathbf{{b}}_{e}\}$ represent the parameters of a mDAE with linear decoding, squared loss and $\sigma_{\mathbf{x}i}^{2} = \lambda$ $\forall i$, at any point of training over data sampled from some distribution $\mathcal{D}$. Then,
	\begin{equation}
	\label{eq_mdae_equiv}
	\mathcal{J}_{mDAE} = \mathcal{J}_{AE} + \lambda \mathbb{E}_{\mathbf{x}} \left[  \sum_{j=1}^{m} \left( \left(\frac{\partial {h}_{j}}{\partial {a}_{j}}\right)^{2} \lVert \mathbf{{W}}_{j} \rVert_{2}^{4} \right) \right]
	\end{equation}
\end{remark}

Apart from justifying sparsity in the above AEs, these equivalences also expose the similarity between DAE, CAE and mDAE regularization as they all follow the form in corollary \ref{cor_ae_reg_form1}. Note how the goal of achieving invariance in hidden and original representation respectively in CAE and mDAE show up as a mere factor of weight length in their regularization in the case of linear decoding.\\
\paragraph*{4) Sparse Auto-Encoder (SAE):} Sparse AEs are given by:
\begin{equation}
\begin{split}
\mathcal{J}_{SAE} = \mathcal{J}_{AE} + \lambda \sum_{j=1}^{m} \left( \rho \log (\rho/{\rho}_{j}) \right. \\
\left. + (1-\rho)\log((1-\rho)/(1-{\rho}_{j})) \right)
\end{split}
\end{equation}
where $ {\rho}_{j} = \mathbb{E}_{\mathbf{x}}[{h}_{j}] $ and $ \rho $ is the desired average activation (typically close to $ 0 $). Thus SAE requires one additional parameter ($ \rho $) that needs to be pre-determined. To make SAE follow our paradigm, we set $ \rho =0 $ and thus tuning the value of $ \lambda $ would automatically enforce a balance between the final level of average sparsity and reconstruction error. Thus the SAE objective becomes
\begin{equation}
\label{eq_sae2}
\mathcal{J}_{SAE} = \mathcal{J}_{AE} - \lambda \sum_{j=1}^{m} \log(1-{\rho}_{j}) \mspace{20mu} (when \mspace{10mu }\rho=0)
\end{equation}
Note for small values of $ {\rho}_{j} $, $ \log (1-{\rho}_{j}) \approx -{\rho}_{j} $. Thus the above objective has a very close resemblance with sparse coding (equation \ref{eq_sc}, except that SC has a non-parametric encoder). On the other hand, the above regularization has a form as specified in corollary \ref{cor_ae_reg_form2} which we have showed enforces sparsity. Thus, although it is expected of the SAE regularization to enforce sparsity from an intuitive standpoint, our results show that it indeed does so from a more theoretical perspective.

%However, we empirically found that $  \mathbb{E}_{\mathbf{x}}[{h}_{j}] \leq 1 $ during AE training process for other activations as well; hence we experiment with all activations.

\section{Empirical Analysis and Observations}
\label{sec_experiments}
We use the following two datasets for our experiments:

\paragraph*{1. MNIST }\cite{mnistlecun}: It is a $10$ class dataset of handwritten digit images of which $50,000$ images are provided for training.
%, $10,000$ validation and $10,000$ as test set.

\paragraph*{2. CIFAR-$10$ }\cite{cifar}: It consists of 60,000 $32 \times 32$ color images of objects in $10$ classes. For CIFAR-$10$, we randomly crop $50,000$ patches of size $8 \times 8$ for training the auto-encoders.

\paragraph*{Experimental Protocols:} Since neural network (NN) optimization is non-convex, training with different optimization conditions (eg. learning rate, data scale and mean, gradient update scheme e.t.c.) can lead to drastically different outcomes. However, one of the very things that make training NNs difficult is well designed optimization strategies without which they do not learn useful features. Our analysis is based on certain assumptions on data distribution and conditions on weight matrices. Thus in order to empirically verify our analysis, we use the following experimental protocols that make the optimization well conditioned.

For all experiments, we use mini-batch stochastic gradient descent with momentum ($ 0.9 $) for optimization, $ 50 $ epochs, batch size $ 50 $ and hidden units $ 1000 $. We train DAE, CAE, mDAE and SAE (using eq. \ref{eq_sae2}) with the same hyper-parameters for all the experiments. For regularization coefficient ($ \sigma^{2} $), we use the values in the set $ \{0,  0.001,0.1^{2},0.2^{2},0.3^{2},0.4^{2},0.5^{2},0.6^{2},0.7^{2},0.8^{2},0.9^{2},$ $1.0 \} $ for all models except DAE where $ \sigma^{2} $ values represent the \textit{variance} of Gaussian noise added. For all models and activation functions, we use squared loss and linear decoding. We initialize the bias to zeros and use normalized initialization \cite{norm_init} for the weights. Further, we subtract mean and divide by standard deviation for all samples. \footnote{\scriptsize We noticed in case of MNIST, it is important to add a large number ($ 0.1 $) to the standard deviation before dividing. We believe this is because MNIST (being binary images with uniform background) does not follow our assumption on data distribution.}

\textit{Learning Rate (LR): }Too small a LR won't move the weights from their initialized region and the convergence would be very slow. On the other hand, if we use too large a learning rate, it will change weight direction very drastically (may diverge), something we don't desire for our predictions to hold. So, we find a middle ground and choose LR in the range $ (0.001,0.005) $ for our experiments.

\paragraph*{Terminology: }We are interested in analysing the sparsity of hidden units as a function of regularization coefficient $ \sigma^{2} $ through out our experiments. Recall that our notion of sparsity \ref{sec_sparsity_reason} is denoted by the fraction of data samples that deactivate a hidden unit instead of the fraction of hidden units that deactivate for a given data sample. This choice was made in order to treat each hidden unit as a random variable. Since we cannot identify a particular hidden unit across auto-encoders trained with different values of $ \sigma^{2} $, the only way for measuring the level of sparsity in auto-encoder units is compute the \textit{Average Activation Fraction}, which is defined as follows:
\begin{equation}
Avg. Act. Fraction = \frac{\sum_{i=1}^{N} \sum_{j=1}^{m} \mathbf{1}(h_{j}^{i} > \delta_{\min}) }{N \times m}
\end{equation} 
Here $ \mathbf{1}(.) $ is the indicator operator, $ h_{j}^{i} $ denotes the $ j^{th} $ hidden unit for the $ i^{th} $ data sample, and $ \delta_{\min} $ is the activation threshold. In the case ReLU, $ \delta_{\min} =0 $, and in the case of Sigmoid and Softplus, $ \delta_{\min} = 0.1 $. Also $ N $ and $ m $ denote the total number of data samples and number of hidden units respectively. Notice sparsity of a hidden unit is inversely related to the average activation fraction for a single unit. Thus our definition of \textit{Avg. Activation Fraction} is the indicator of average sparsity across all hidden units. Finally, while measuring Avg. Activation Fraction during training, we also keep track of fraction of \textit{dead units}. Dead units are those hidden units which deactivate for all data samples and are thus unused by the network for data reconstruction. Notice while achieving sparsity, it is desired that minimal hidden units are dead and all \textit{alive} units activate only for a small fraction of data samples. 
%Through out our experiments, we found that the number of dead units were exactly zero for all values of $ \sigma^{2} $ and so we do not show it separately in the plots. This clearly shows that sparsity achieved as a result of increasing the value of $ \sigma^{2} $ is non-trivial where all hidden units participate in sample reconstruction.
%Notice for a hidden layer of 1000 units and $ 50,000 $ data samples (MNIST), Avg. Activation Fraction values above $ 1000/50,000 = 0.02$ can have all units firing for at least one sample. In other words, Avg. Activation Fraction can go as low as $ 0.02 $ without generating dead units. This value is similarly $ 500/50,000 = 0.01 $ for $ 500 $ units and $ 50,000 $ data samples (CIFAR-10).

\subsection{Sparsity when Bias Gradient is zero}
One of the main predictions made based on theorem \ref{th_ae_reg_form} is that the sparsity of hidden units should remain unchanged with respect to $ \sigma^{2} $ when the bias gradient $ \frac{\partial \mathcal{R}}{\partial {b}_{e_{j}}} =0 $ and weight lengths are fixed to a pre-determined value because the expected pre-activation becomes completely independent of $ \sigma^{2} $. Notice this prediction only accounts for change in sparsity as a result of change in expected pre-activation of the corresponding unit. Sparsity can also increase when expected pre-activation for that unit is fixed, as a result of change in weight directions such that majority samples take pre-activation values below activation threshold while the minority takes values above it such that the overall expected value remains unchanged. This change in weight directions is also affected by $ \sigma^{2} $ since regularization functions specified in corollary \ref{cor_ae_reg_form1} and \ref{cor_ae_reg_form2} contain both weight and bias terms. However, the latter factor contributing to change in sparsity is unpredictable in terms of changing $ \sigma^{2} $ values. Hence it is desired for sparsity to be largely affected only when bias gradient is present for better predictive power.

Hence we analyse the effect of regularization coefficient ($ \sigma^{2} $) on the sparsity of representations learned by AE models using ReLU activation function with weight lengths constrained to be one. Notice ReLU has zero bias gradient for CAE and mDAE, but also for the equivalent regularization derived for DAE \ref{th_dae}. The plots are shown in figure \ref{tab_zero_bias_grad_w1}.\footnote{\scriptsize For weight length constrained to $ 1 $, CAE and mDAE objectives become equivalent.} 

\begin{figure}[t]
	
	\begin{center}
		\begin{tabular}{  c c  }
			\includegraphics[width=0.45\columnwidth,trim=2.4in 0.1in 3.1in 0.1in,clip]{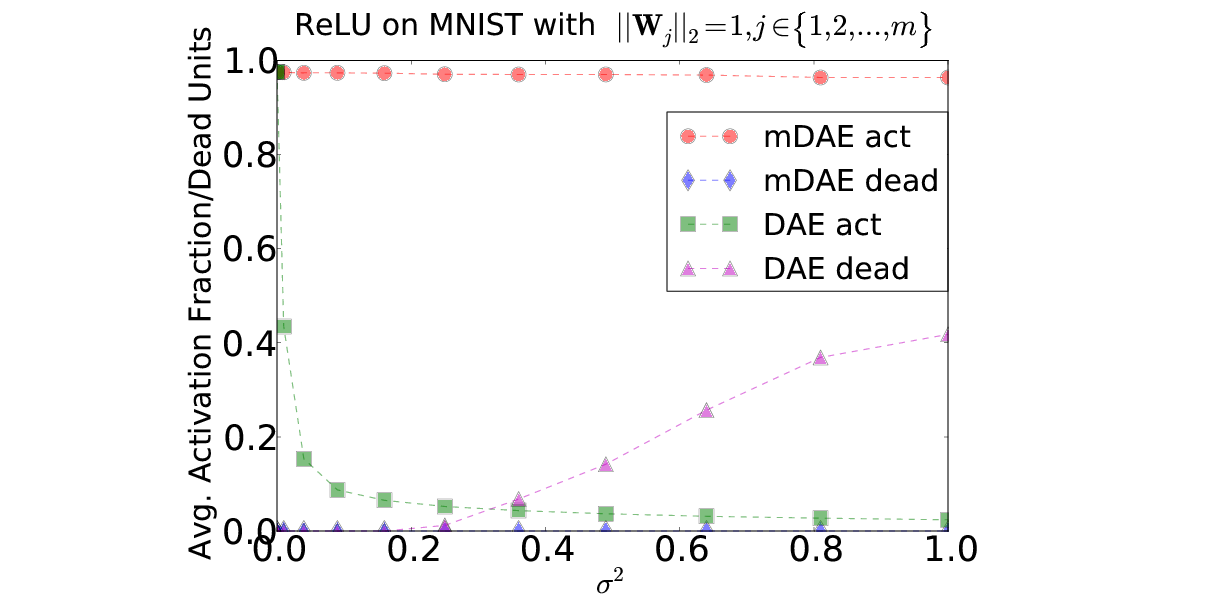}
			& %\hline
			\includegraphics[width=0.45\columnwidth,trim=2.4in 0.1in 3.1in 0.1in,clip]{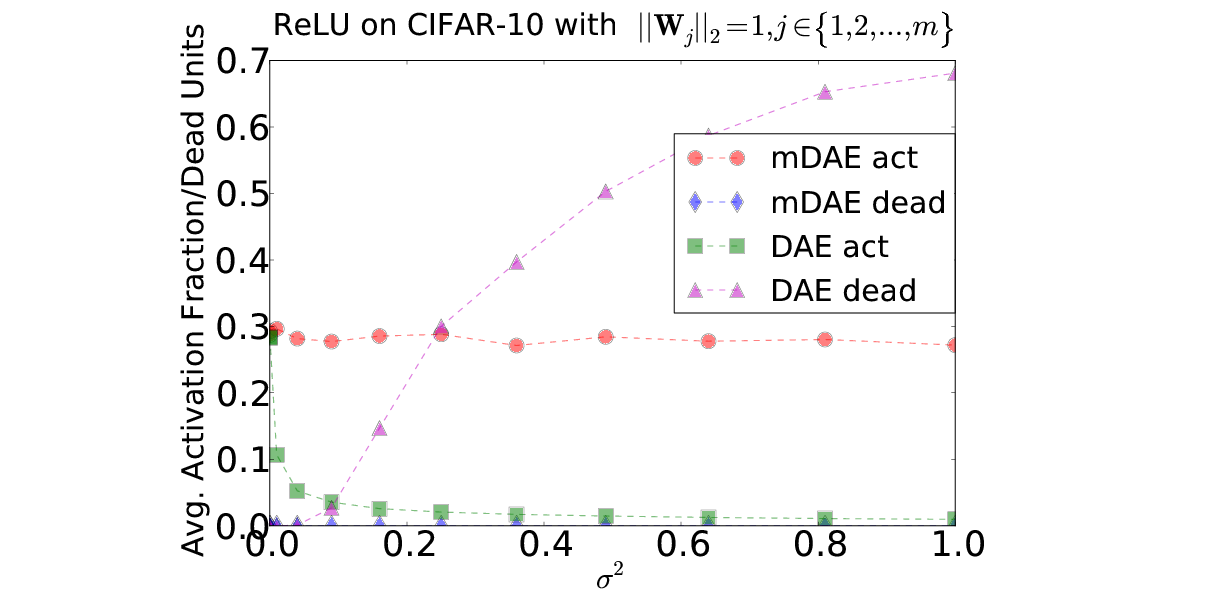}
		\end{tabular}
		\caption{{Trend of average activation fraction \textit{vs.} $ \sigma^{2} $ with weight length constraint using ReLU on MNIST (left) and CIFAR-10 (right).} \label{tab_zero_bias_grad_w1}}
	\end{center}
	\vspace{-20pt}
\end{figure} 

We see that the effect of bias gradient largely dominates the behaviour of hidden units in terms of sparsity. Specifically, as predicted, average activation fraction (and thus sparsity) remains unchanged with respect to regularization coefficient $ \sigma^{2} $ when ReLU is applied to CAE and mDAE due to the absence of bias gradient. % We can see the effect of $ \sigma^{2} $ on sparsity for CAE/mDAE only for very small $ \sigma^{2} $ values ($ 0 - 0.1 $) which is due to the effect of $ \sigma^{2} $ on weight direction. However, in practical applications, larger values of $ \sigma^{2} $ are used in order for the regularization to have noticeable effect. On the other hand, increasing $sigma^{2}$ leads to increasing sparsity as also predicted by theorem \ref{th_ae_reg_form} since ReLU does generate bias gradient from first order terms (corollary \ref{cor_ae_reg_form2}).

We also analyse the effect of regularization coefficient ($ \sigma^{2} $) on the sparsity of representations learned by AE models using ReLU activation functions when weight lengths are not constrained. These plots can be seen in fig \ref{tab_zero_bias_grad_w_uncons}. We find that the trend becomes unpredictable for both CAE and mDAE (both datasets have different trends). As discussed after theorem \ref{th_ae_reg_form}, without weight length constraint, $ \sigma^{2} $ affects weight length which in turn affects $ \frac{\partial \mathcal{J}_{AE}}{\partial {b}_{e_{j}}} $ that changes the value of expected pre-activation. However, this effect is unpredictable and thus undesired. 

On the other hand, we see that for DAE, in the constrained length case (fig \ref{tab_zero_bias_grad_w1}), the number of dead units start rising only after the average activation fraction reaches around $ 0.05 $. However, in case of unconstrained weight length, ReLU does not go below the avg. activation fraction of $ 0.1 $. This shows that constrained weight length achieves higher level of sparsity before giving rise to dead units.

In summary, we find that bias gradient dominates the behaviour of hidden units in terms of sparsity. Also, these experiments suggest we get both more predictive power and better sparsity with hidden weights constrained to have fixed (unit) length. Notice this does not restrict the usefulness of the representation leaned by auto-encoders since we are only interested in the filter shapes, and not their scale.

%\begin{figure}[!t!m!b]
\begin{figure}[!t!b]
	
	\begin{center}
		\begin{tabular}{  c  c  }
			\includegraphics[width=0.45\columnwidth,trim=2.4in 0.1in 3.1in 0.1in,clip]{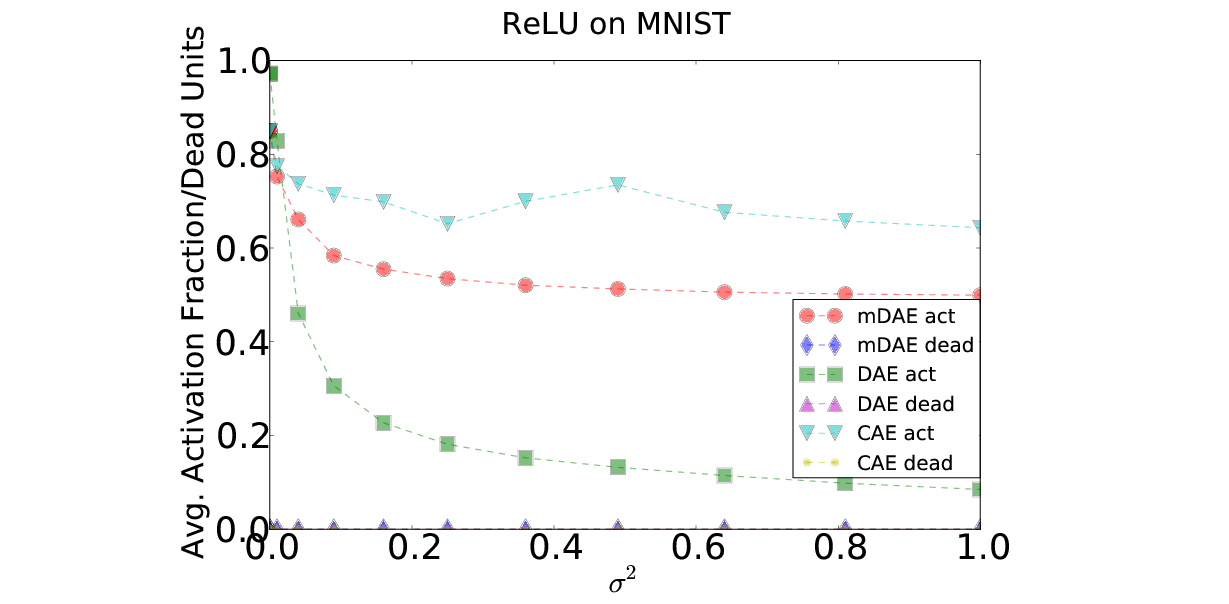}
			& %\hline
			\includegraphics[width=0.45\columnwidth,trim=2.4in 0.1in 3.1in 0.1in,clip]{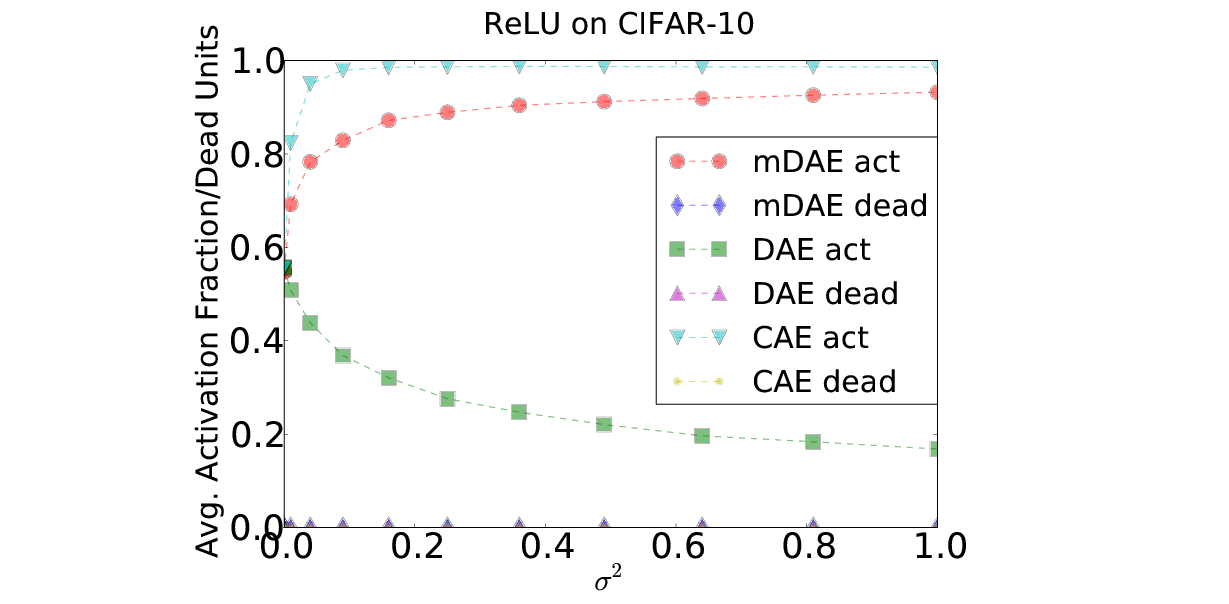}
			\\ 
		\end{tabular}
		\caption{{Trend of average activation fraction \textit{vs.} $ \sigma^{2} $ without weight length constraint using ReLU on MNIST (left) and CIFAR-10 (right).} \label{tab_zero_bias_grad_w_uncons}}
	\end{center}
	\vspace{-20pt}
\end{figure} 

\begin{figure}[b]
	\vspace{-10pt}
	\centering
	\includegraphics[width=0.4\columnwidth,trim=0.25in 0.1in 0.6in 0.38in,clip]{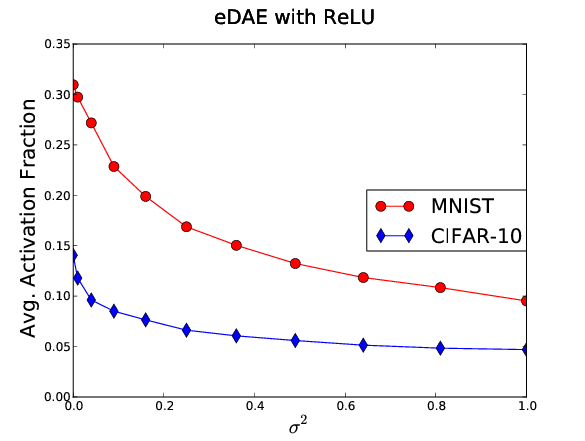}
	\caption{Activation fraction \textit{vs.} $ \sigma^{2} $ for eDAE.\label{fig_bgrad_edae}} % using ReLU showing $ 1^{st}$ order term in DAE contributes towards sparsity
	%	\vspace{-10pt}
\end{figure}

\subsubsection{Why is DAE affected by $ \sigma^{2} $ when ReLU has zero bias gradient?}
\label{sec_dae_robust}
The surprising part of the above experiments is that DAE has a stable decreasing sparsity trend (across different values of $ \sigma^{2} $) for ReLU although DAE (similar to CAE, mDAE) has a regularization form given in corollary \ref{cor_ae_reg_form1}. The fact that ReLU practically does not generate bias gradients from this form of regularization brings our attention to an interesting possibility: ReLU is generating the positive bias gradient due to the first order regularization term in DAE. Recall that we marginalize out the first order term in DAE (during Taylor's expansion, see proof of theorem \ref{th_dae}) while taking expectation over all corrupted versions of a training sample. However, the mathematically equivalent objective of DAE obtained by this analytical marginalization is not what we optimize in practice. While optimizing with explicit corruption in a batch-wise manner, we indeed get a non-zero first order term, which does not vanish due to finite sampling (of corrupted versions); thus explaining sparsity for ReLU. We test this hypothesis by optimizing the explicit Taylor's expansion of DAE (eDAE) with only the first order term on MNIST and CIFAR-$ 10 $ using our standard experimental protocols:
\[
\mathcal{J}_{eDAE} = \mathbb{E}_{\mathbf{x}}[\ell(\mathbf{x},f_{d}(f_{e}({\mathbf{x}}))) 
+ (\tilde{\mathbf{x}} - {\mathbf{x}})^{T} \nabla_{\tilde{\mathbf{x}}}\ell]
\]
where $ \tilde{\mathbf{x}} $ is a Gaussian corrupted version of $ {\mathbf{x}} $. The activation fraction \textit{vs.} corruption variance ($\sigma^{2}$) for eDAE is shown in figure \ref{fig_bgrad_edae} which confirms that the first order term contributes towards sparsity. On a more general note, lower order terms (in Taylor's expansion) of highly non-linear functions generally change slower (hence less sensitive) compared to higher order terms. In conclusion we find that explicit corruption may have advantages at times compared to marginalization because it captures the effect of both lower and higher order terms together.

\begin{figure}[t]
	
	\begin{center}
		\begin{tabular}{  c  c  }
			%\hline
			%Sigmoid & ReLU & ReS \\ \hline
			\includegraphics[width=0.45\columnwidth,trim=2.2in 0.1in 3.1in 0.1in,clip]{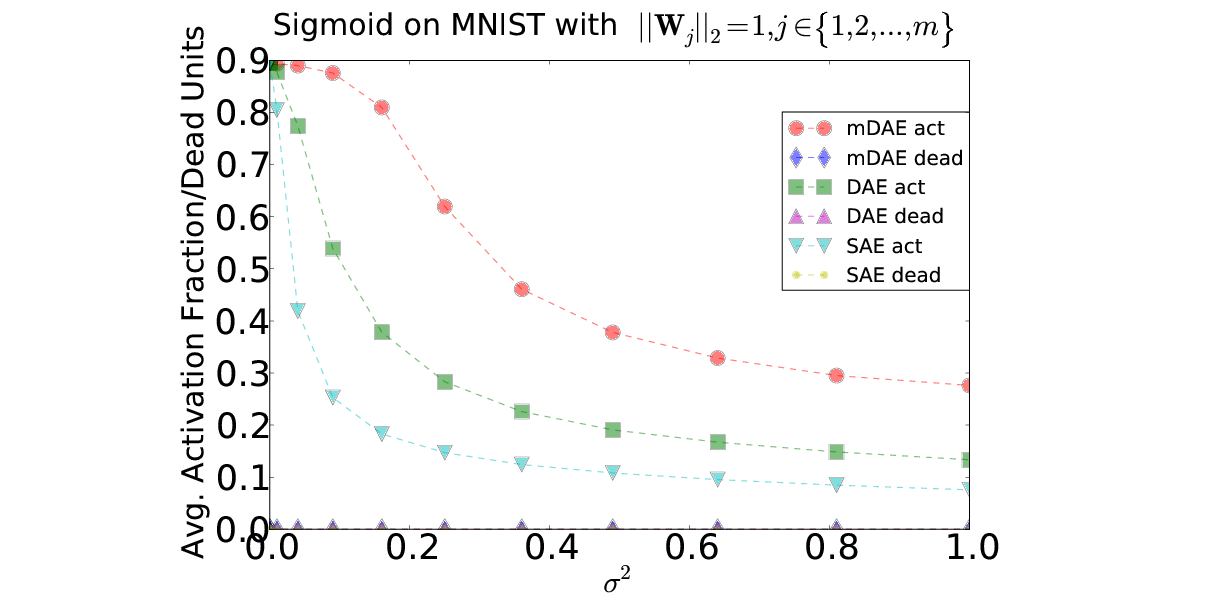}
			& 
			\includegraphics[width=0.45\columnwidth,trim=2.2in 0.1in 3.1in 0.1in,clip]{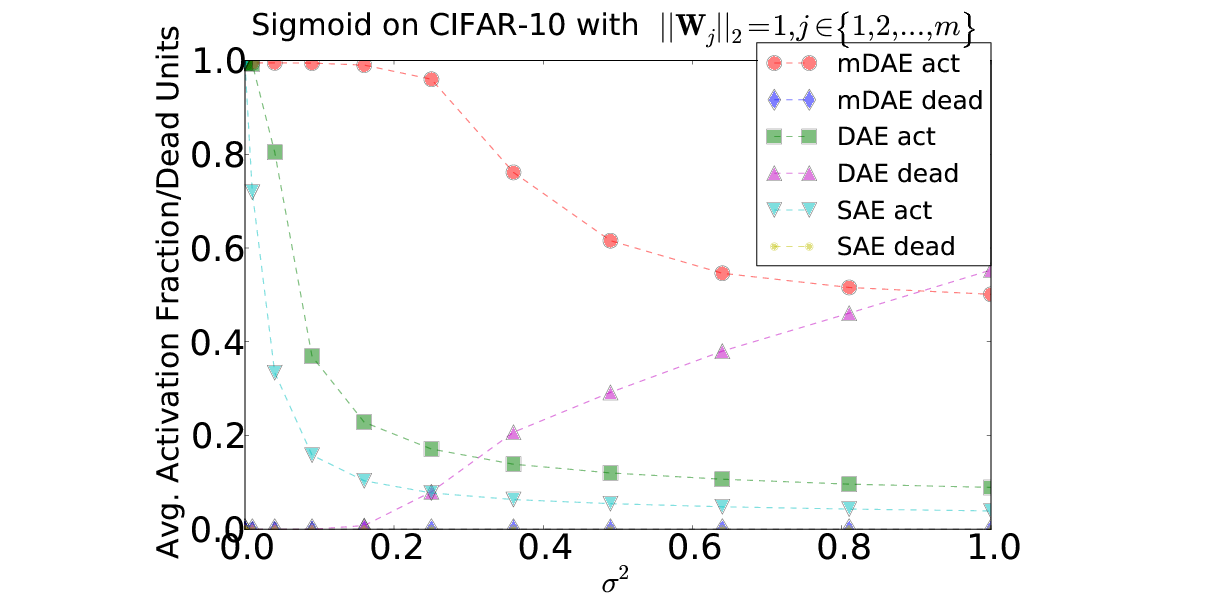}
		\end{tabular}
		\caption{{Trend of average activation fraction \textit{vs.} $ \sigma^{2} $ with weight length constraint using Sigmoid activation MNIST (left) and CIFAR-10 (right).} \label{tab_act_softplus_sigmoid_w1}}
	\end{center}
	\vspace{-20pt}
\end{figure}  

\subsection{Sparsity when Bias Gradient is positive}
As predicted by theorem \ref{th_ae_reg_form}, if the bias gradient is strictly positive ($ \frac{\partial \mathcal{R}}{\partial {b}_{e_{j}}} > 0 $), then increasing the value of $ \sigma^{2} $ should lead to smaller expected pre-activation and thus increasing sparsity. This is specially true when the weight lengths are fixed to some length. This is because term $ \frac{\partial \mathcal{R}}{\partial {b}_{e_{j}}} $ may depend on weight length (depending on the regularization) which is also affected by $ \sigma^{2} $. However, since this effect is hard to predict, sparsity may not always be proportional to $ \sigma^{2} $ for un-constrained weight length.

In order to verify these intuitions,  we first analyse the effect of regularization coefficient ($ \sigma^{2} $) on the sparsity of representations learned by AE models using Sigmoid\footnote{\scriptsize Due to lack of space and because Softplus had trends similar to Sigmoid, we don't show its plots.}\footnote{\scriptsize Although Sigmoid only guarantees sparsity for regularizations in corollary \ref{cor_ae_reg_form2} (eg. SAE), we find it behaves similarly for corollary \ref{cor_ae_reg_form1}(eg. mDAE, CAE).} activation function with weight lengths constrained to one. The plots are shown in figure \ref{tab_act_softplus_sigmoid_w1}. These plots show a stable increasing sparsity trend with increasing regularization coefficient as predicted by our analysis.

Finally, we now analyse the effect of regularization coefficient ($ \sigma^{2} $) on the sparsity of representations learned by AE models using Sigmoid activation function when weight lengths are unconstrained. These plots are shown in figure \ref{tab_act_softplus_sigmoid_w_uncons}. As mentioned above, unconstrained weight length leads to unpredictable behaviour of sparsity with respect to regularization coefficient. This can be seen for mDAE and CAE for both datasets (different trends).

In summary, we again find that weight lengths constrained to have some fixed value lead to better predictive power in terms of sparsity. However in either case, the empirical observations substantiate our claim that sparsity in auto-encoders is dominated by the effect of bias gradient from regularization instead of weight direction. This explains why existing regularized auto-encoders learn sparse representation and the effect of regularization coefficient on sparsity.

\begin{figure}[t]
	
	\begin{center}
		\begin{tabular}{  c  c  }
			%\hline
			%Sigmoid & ReLU & ReS \\ \hline
			\includegraphics[width=0.45\columnwidth,trim=2.4in 0.1in 3.1in 0.1in,clip]{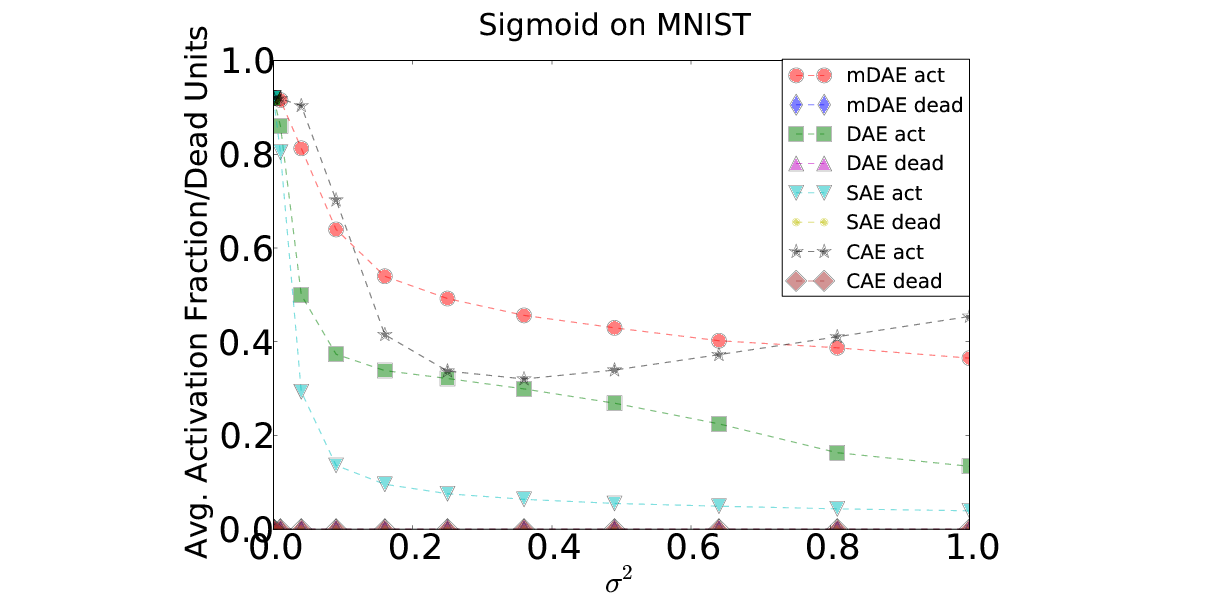}
			& 
			\includegraphics[width=0.45\columnwidth,trim=2.4in 0.1in 3.1in 0.1in,clip]{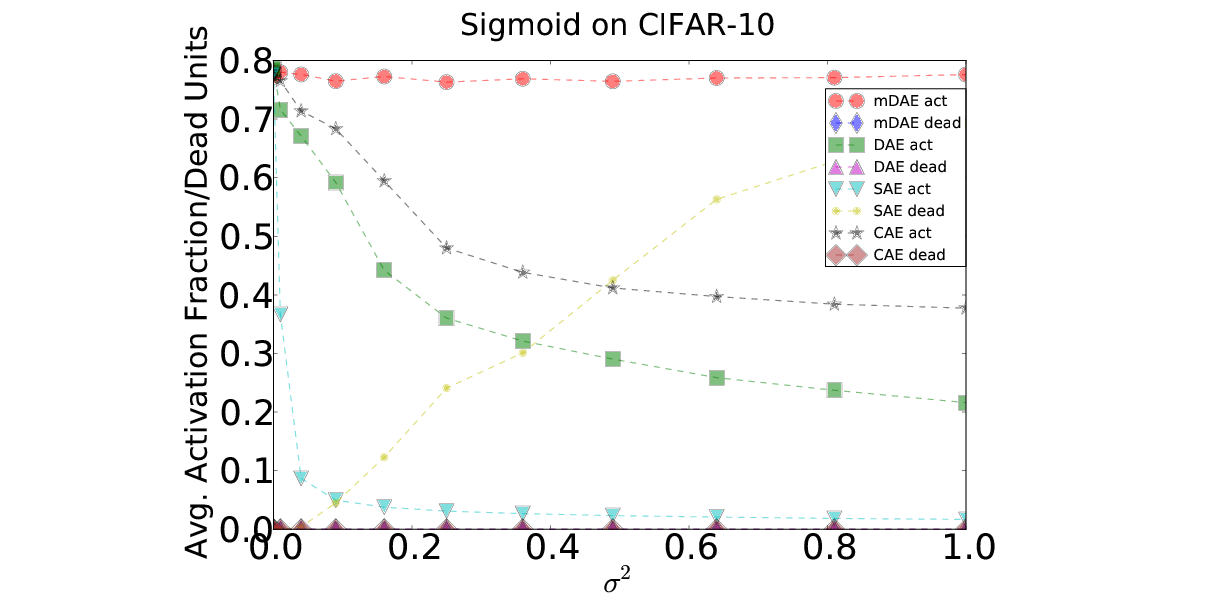}
			\\ 
			%			\includegraphics[width=0.45\columnwidth,trim=0.25in 0.1in 0.6in 0.1in,clip]{fig/act_softplus_mnist_w_uncons.png}
			%			& %\hline
			%			\includegraphics[width=0.45\columnwidth,trim=0.25in 0.1in 0.6in 0.1in,clip]{fig/act_softplus_cifar10_w_uncons.png}
			%			\\ 
		\end{tabular}
		\caption{{Trend of average activation fraction \textit{vs.} $ \sigma^{2} $ without weight length constraint using Sigmoid activation MNIST (left) and CIFAR-10 (right).} \label{tab_act_softplus_sigmoid_w_uncons}}
	\end{center}
	\vspace{-20pt}
\end{figure}  

\section{Conclusion and Discussion}
We establish a formal connection between features learned by regularized auto-encoders and sparse representation. Our contribution is multi-fold, we show: a) AE regularizations with positive encoding bias gradient encourage sparsity (theorem \ref{th_ae_reg_form}), while those with zero bias gradient are not affected by regularization coefficient; b) activation functions which are non-decreasing, with negative saturation at zero, encourage sparsity for such regularizations (theorem \ref{th_activation}) and that multiple existing activations have this property (eg. ReLU, Softplus and Sigmoid); c) existing AEs have regularizations of the form suggested in corollary \ref{cor_ae_reg_form2} and \ref{cor_ae_reg_form1}, which not only brings them under a unified framework, but also shows more general forms of regularizations that encourage sparsity.

On the empirical side, a) bias gradient dominates the effect on sparsity of hidden units; specifically sparsity is in general proportional to the regularization coefficient when bias gradient is positive and remains unaffected when it is zero (section \ref{sec_experiments}); b) Constraining the weight vectors during optimization to have fixed length leads to better sparsity and behaviour as predicted by our analysis. Notice this does not restrict the usefulness of the representation leaned by auto-encoders since we are only interested in the filter shapes (weight direction), and not their scale. On the flip side, without length constraint, the behaviour of auto-encoders w.r.t. regularization coefficient becomes unpredictable in some cases. c) explicit corruption (\textit{eg.} DAE) may have advantages over marginalizing it out (\textit{eg.} mDAE, see section \ref{sec_dae_robust}) because it captures both first and second order effects. 

In conclusion, our analysis combined together unifies existing AEs and activation functions by bringing them under a unified framework, but also uncovers more general forms of regularizations and fundamental properties that encourage sparsity in hidden representation. Our analysis also yields new insights into AEs and provides novel tools for analysing existing (and new) regularization/activation functions that help predicting whether the resulting AE learns sparse representations.

{\small
	\bibliographystyle{icml2016}
	\bibliography{ae_sr}
}
\newpage
\onecolumn
\appendix    
\appendixpage
%\addappheadtotoc\textbf{}
\begin{appendix}
\setcounter{section}{1}
\setcounter{proposition}{0}
\setcounter{theorem}{0}
\setcounter{lemma}{0}
\setcounter{corollary}{0}
\renewcommand{\thesection}{A\arabic{section} }
\textbf{\large \thesection{Supplementary Material}}

\subsection{Supplementary Proofs}
\begin{lemma}
\label{lemma_loss_grad_bound}
If assumption \ref{assump_residual} is true, and encoding activation function $ s_{e}(.) $ has first derivative in $ [0,1] $, then $ {\partial \mathcal{J}_{AE}}/{\partial {b}_{e_{j}}} \in \left[ -2\sigma_{r} \sqrt{n}\lVert \mathbf{W}_{j} \rVert, 2\sigma_{r} \sqrt{n}\lVert \mathbf{W}_{j} \rVert  \right]$.
\end{lemma}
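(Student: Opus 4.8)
The plan is to differentiate $\mathcal{J}_{AE}$ directly with respect to the encoding bias $b_{e_j}$ and then bound the resulting expectation using the distributional assumptions on the residual. First I would write the loss with linear decoding and squared error as $\mathcal{J}_{AE} = \mathbb{E}_{\mathbf{x}}[\lVert\mathbf{r}_{\mathbf{x}}\rVert^{2}]$ with $\mathbf{r}_{\mathbf{x}} = \mathbf{x} - \mathbf{W}^{T} s_{e}(\mathbf{W}\mathbf{x}+\mathbf{b}_{e})$ as in Assumption~\ref{assump_residual}. Since only the $j$-th hidden unit $h_{j}=s_{e}(a_{j})$ depends on $b_{e_{j}}$, the chain rule gives $\partial h_{k}/\partial b_{e_{j}} = s_{e}'(a_{j})\,\delta_{kj}$, so that $\partial \mathbf{r}_{\mathbf{x}}/\partial b_{e_{j}} = -s_{e}'(a_{j})\,\mathbf{W}_{j}^{T}$ (the decoder weight $\mathbf{W}^{T}$ selects exactly the $j$-th row of $\mathbf{W}$).

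Carrying this through and using $\mathbf{r}_{\mathbf{x}}^{T}\mathbf{W}_{j}^{T}=\mathbf{W}_{j}\mathbf{r}_{\mathbf{x}}$ as a scalar yields
\[
\frac{\partial \mathcal{J}_{AE}}{\partial b_{e_{j}}} = -2\,\mathbb{E}_{\mathbf{x}}\!\left[ s_{e}'(a_{j})\,\mathbf{W}_{j}\mathbf{r}_{\mathbf{x}} \right].
\]
The next step is to take absolute values and bound the integrand term by term. Pulling the absolute value inside the expectation (Jensen) and invoking the hypothesis $s_{e}'(\cdot)\in[0,1]$ to discard the derivative factor leaves $\lvert\partial\mathcal{J}_{AE}/\partial b_{e_{j}}\rvert \le 2\,\mathbb{E}_{\mathbf{x}}[\lvert\mathbf{W}_{j}\mathbf{r}_{\mathbf{x}}\rvert]$. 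Applying Cauchy--Schwarz gives $\lvert\mathbf{W}_{j}\mathbf{r}_{\mathbf{x}}\rvert \le \lVert\mathbf{W}_{j}\rVert\,\lVert\mathbf{r}_{\mathbf{x}}\rVert$, and one further application of Jensen bounds $\mathbb{E}_{\mathbf{x}}[\lVert\mathbf{r}_{\mathbf{x}}\rVert] \le \sqrt{\mathbb{E}_{\mathbf{x}}[\lVert\mathbf{r}_{\mathbf{x}}\rVert^{2}]}$. Here the distributional assumption enters decisively: with each of the $n$ coordinates of $\mathbf{r}_{\mathbf{x}}$ i.i.d. $\mathcal{N}(0,\sigma_{r}^{2})$, we have $\mathbb{E}_{\mathbf{x}}[\lVert\mathbf{r}_{\mathbf{x}}\rVert^{2}]=n\sigma_{r}^{2}$, hence $\mathbb{E}_{\mathbf{x}}[\lVert\mathbf{r}_{\mathbf{x}}\rVert]\le\sigma_{r}\sqrt{n}$. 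Combining the three inequalities gives $\lvert\partial\mathcal{J}_{AE}/\partial b_{e_{j}}\rvert \le 2\sigma_{r}\sqrt{n}\,\lVert\mathbf{W}_{j}\rVert$, which is precisely the claimed two-sided interval.

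The calculation is largely routine; the only genuinely delicate point is the first step -- getting the gradient right, in particular keeping track of the factor of two coming from the squared loss and recognizing that the decoder transpose $\mathbf{W}^{T}$ re-introduces the row $\mathbf{W}_{j}$, so that the residual is projected along exactly that direction. I would also flag that the $\sqrt{n}$ factor is a deliberate artifact of bounding $\lvert\mathbf{W}_{j}\mathbf{r}_{\mathbf{x}}\rvert$ by Cauchy--Schwarz rather than using the fact that $\mathbf{W}_{j}\mathbf{r}_{\mathbf{x}}$ is itself Gaussian with standard deviation $\sigma_{r}\lVert\mathbf{W}_{j}\rVert$ (which would give the sharper constant $\sigma_{r}\lVert\mathbf{W}_{j}\rVert\sqrt{2/\pi}$); the looser form is preferable here because it cleanly decouples the weight length $\lVert\mathbf{W}_{j}\rVert$ from the residual, so the bound can be fed directly into the comparison $\lambda\,\partial\mathcal{R}/\partial b_{e_{j}} > 2\sigma_{r}\sqrt{n}\lVert\mathbf{W}_{j}\rVert$ of Theorem~\ref{th_ae_reg_form}.
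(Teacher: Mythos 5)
Your proof is correct and takes essentially the same route as the paper's: compute the bias gradient of the squared loss, drop the derivative factor using $s_{e}'(\cdot)\in[0,1]$, apply Cauchy--Schwarz to get $\lvert \mathbf{W}_{j}\mathbf{r}_{\mathbf{x}}\rvert \le \lVert\mathbf{W}_{j}\rVert\,\lVert\mathbf{r}_{\mathbf{x}}\rVert$, and bound $\mathbb{E}_{\mathbf{x}}[\lVert\mathbf{r}_{\mathbf{x}}\rVert]\le\sigma_{r}\sqrt{n}$ by Jensen together with the i.i.d.\ Gaussian residual assumption. The only differences are cosmetic: you carry the (correct) minus sign in the gradient and treat both directions at once via an absolute value, whereas the paper bounds one side and says the other follows symmetrically.
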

\begin{proof}
For squared loss function $ \mathcal{J}_{AE} $,
\begin{equation}
\frac{\partial \mathcal{J}_{AE}}{\partial {b}_{e_{j}}} = 2 \mathbb{E}_{\mathbf{x}}\left[ \frac{\partial s_{e} (a_{j}) }{ \partial a_{j}} \left( \mathbf{x} - \mathbf{W}^{T}s_{e}(\mathbf{W}\mathbf{x} + \mathbf{b}_{e}) \right)^{T}\mathbf{W}_{j} \right] = 2 \mathbb{E}_{\mathbf{x}}\left[ \frac{\partial s_{e} (a_{j}) }{ \partial a_{j}} \mathbf{r}_{\mathbf{x}}^{T}\mathbf{W}_{j} \right] 
\end{equation}
where $ a_{j} = \mathbf{W}_{j}^{T}\mathbf{x} + b_{j} $. Since $ \frac{\partial s_{e} (a_{j}) }{ \partial a_{j}} \in [0,1] $, 
\begin{equation}
\mathbb{E}_{\mathbf{x}}\left[ \frac{\partial s_{e} (a_{j}) }{ \partial a_{j}} \mathbf{r}_{\mathbf{x}}^{T}\mathbf{W}_{j} \right] \leq \mathbb{E}_{\mathbf{x}}\left[ \frac{\partial s_{e} (a_{j}) }{ \partial a_{j}} \lVert \mathbf{r}_{\mathbf{x}} \rVert  \lVert \mathbf{W}_{j} \rVert \right] \leq \lVert \mathbf{W}_{j} \rVert . \mathbb{E}_{\mathbf{x}}\left[ \lVert \mathbf{r}_{\mathbf{x}} \rVert  \right] 
\end{equation}
Let $ r_{\mathbf{x}} $ denote any one of the elements of $ \mathbf{r}_{\mathbf{x}} $. Since each element of $ \mathbf{r}_{\mathbf{x}} $ is \textit{i.i.d.} from assumption \ref{assump_residual} and $ \mathbf{r}_{\mathbf{x}} \in \mathbb{R}^{n}$, using Jensen's inequality, $ \mathbb{E}_{\mathbf{x}}\left[ \lVert \mathbf{r}_{\mathbf{x}} \rVert_{2}  \right] \leq \sqrt{n \mathbb{E}_{\mathbf{x}}[ r_{\mathbf{x}}^{2}]} = \sqrt{n}\sigma_{r}$. %From Chebyshev's inequality, 
%\begin{equation}
%\Pr(|r_{\mathbf{x}}| > \epsilon) \leq \sigma_{r}^{2}/\epsilon^{2}
%\end{equation}
%Since $ \mathbb{E}_{\mathbf{x}}[ |r_{\mathbf{x}} |] $ lies within the range of $ |r_{\mathbf{x}} | $, we get that,
%\begin{equation}
%\mathbb{E}_{\mathbf{x}}[ |r_{\mathbf{x}} |]  = \sigma_{r}
%\end{equation}
Thus,
\begin{equation}
\mathbb{E}_{\mathbf{x}}\left[ \frac{\partial s_{e} (a_{j}) }{ \partial a_{j}} \mathbf{r}_{\mathbf{x}}^{T}\mathbf{W}_{j} \right] \leq  \sqrt{n} \sigma_{r} \lVert \mathbf{W}_{j} \rVert 
\end{equation}
which leads to $ \frac{\partial \mathcal{J}_{AE}}{\partial {b}_{e_{j}}}  \leq  2 \sigma_{r} \sqrt{n} \lVert \mathbf{W}_{j} \rVert$. We can similarly prove in the other direction get the desired bound.
\end{proof}

\begin{theorem}
Let $\{ \mathbf{{W}}^{t} \in \mathbb{R}^{m \times n}, \mathbf{{b}}_{e}^{t} \in \mathbb{R}^{m}  \}$ be the parameters of a regularized auto-encoder ($ \lambda > 0 $)
\begin{equation}
\mathcal{J}_{RAE} = \mathcal{J}_{AE} + \lambda \mathcal{R}({\mathbf{W}},{\mathbf{b}}_{e})
\end{equation}

at training iteration $t$ with regularization term $\mathcal{R}({\mathbf{W}},{\mathbf{b}}_{e})$, activation function $s_{e}(.)$ and define pre-activation ${a}_{j}^{t} = \mathbf{{W}}_{j}^{t}\mathbf{x} + {b}_{e_{j}}^{t}$ (thus ${h}_{j}^{t} = s_{e}({a}_{j}^{t})$). \textbf{If} $\lambda \frac{\partial \mathcal{R}}{\partial {b}_{e_{j}}} > 2 \sigma_{r} \sqrt{n} \lVert \mathbf{W}_{j} \rVert $, where $j \in \{ 1,2, \hdots , m \}$, \textbf{then} updating $\{ \mathbf{{W}}^{t}, \mathbf{{b}}_{e}^{t} \}$ along the negative gradient of $\mathcal{J}_{RAE}$, results in  $\mathbb{E}_{\mathbf{x}}[{a}_{j}^{t+1}] < \mathbb{E}_{\mathbf{x}}[{a}_{j}^{t}]$ \textbf{and} $\var[{{a}_{j}^{t+1}}] = \lVert \mathbf{{W}}_{j}^{t+1} \rVert^{2}$ \textbf{for} all $ t\geq0 $.
\end{theorem}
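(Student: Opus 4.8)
The plan is to reduce the entire statement to the behaviour of the single scalar ${b}_{e_{j}}$, exploiting the first- and second-moment conditions of Assumption~\ref{assump_residual}. The first thing I would observe is that, because $\mathbb{E}_{\mathbf{x}}[\mathbf{x}] = \mathbf{0}$, the weight term contributes nothing to the expected pre-activation:
\begin{equation}
\mathbb{E}_{\mathbf{x}}[{a}_{j}^{t}] = \mathbf{{W}}_{j}^{t}\mathbb{E}_{\mathbf{x}}[\mathbf{x}] + {b}_{e_{j}}^{t} = {b}_{e_{j}}^{t}.
\end{equation}
Hence monitoring $\mathbb{E}_{\mathbf{x}}[{a}_{j}^{t}]$ is exactly monitoring the bias, and --- crucially --- this identity is insensitive to how the weight vector $\mathbf{{W}}_{j}$ is updated. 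So the first conclusion will follow entirely from the sign of the bias gradient, and the simultaneous change in $\mathbf{W}$ can be ignored for this part.

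Next I would write out the gradient-descent update of the bias, ${b}_{e_{j}}^{t+1} = {b}_{e_{j}}^{t} - \eta\, \partial \mathcal{J}_{RAE}/\partial {b}_{e_{j}}$ with step size $\eta > 0$, and split the gradient as $\partial \mathcal{J}_{AE}/\partial {b}_{e_{j}} + \lambda\, \partial \mathcal{R}/\partial {b}_{e_{j}}$. The key step is to invoke Lemma~\ref{lemma_loss_grad_bound}, which supplies the lower bound $\partial \mathcal{J}_{AE}/\partial {b}_{e_{j}} \geq -2\sigma_{r}\sqrt{n}\lVert \mathbf{W}_{j}\rVert$, and combine it with the hypothesis $\lambda\, \partial \mathcal{R}/\partial {b}_{e_{j}} > 2\sigma_{r}\sqrt{n}\lVert \mathbf{W}_{j}\rVert$ to conclude $\partial \mathcal{J}_{RAE}/\partial {b}_{e_{j}} > 0$. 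Since $\eta > 0$, this forces ${b}_{e_{j}}^{t+1} < {b}_{e_{j}}^{t}$, and by the moment identity above this is precisely $\mathbb{E}_{\mathbf{x}}[{a}_{j}^{t+1}] < \mathbb{E}_{\mathbf{x}}[{a}_{j}^{t}]$. The ``for all $t\geq 0$'' clause then holds by treating the hypothesis as maintained at each iteration, so the same argument applies verbatim at every step.

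For the variance claim I would use the second-moment condition directly. Since the bias is a constant and $\mathbb{E}_{\mathbf{x}}[\mathbf{{W}}_{j}^{t+1}\mathbf{x}] = 0$,
\begin{equation}
\var[{a}_{j}^{t+1}] = \mathbb{E}_{\mathbf{x}}\!\left[ (\mathbf{{W}}_{j}^{t+1}\mathbf{x})^{2} \right] = \mathbf{{W}}_{j}^{t+1}\,\mathbb{E}_{\mathbf{x}}[\mathbf{x}\mathbf{x}^{T}]\,(\mathbf{{W}}_{j}^{t+1})^{T} = \lVert \mathbf{{W}}_{j}^{t+1}\rVert^{2},
\end{equation}
using $\mathbb{E}_{\mathbf{x}}[\mathbf{x}\mathbf{x}^{T}] = \mathbf{I}$; this part is unconditional and does not invoke the hypothesis at all.

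I do not expect a serious obstacle here: the analytic work of bounding the loss gradient is already carried out inside Lemma~\ref{lemma_loss_grad_bound}, so this theorem is essentially a corollary of that bound together with the two moment assumptions. The only points needing care are the sign bookkeeping --- making sure it is the \emph{lower} bound from the lemma that cancels against the hypothesis --- and the (implicit) reading that the strict inequality $\lambda\,\partial \mathcal{R}/\partial {b}_{e_{j}} > 2\sigma_{r}\sqrt{n}\lVert\mathbf{W}_{j}\rVert$ is assumed to persist across iterations, so that the per-step argument can be chained to yield the claim for all $t$.
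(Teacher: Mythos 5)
Your proposal is correct and follows essentially the same route as the paper's proof: both reduce the first claim to the sign of the bias gradient by using $\mathbb{E}_{\mathbf{x}}[\mathbf{x}]=\mathbf{0}$ to kill the weight-update contribution (the paper does this by taking expectations of the full update for $a_j^{t+1}$, you do it via the identity $\mathbb{E}_{\mathbf{x}}[a_j^t]=b_{e_j}^t$, which is the same cancellation), then combine the lower bound from Lemma~\ref{lemma_loss_grad_bound} with the hypothesis $\lambda\,\partial\mathcal{R}/\partial b_{e_j} > 2\sigma_r\sqrt{n}\lVert\mathbf{W}_j\rVert$, and finally compute the variance from $\mathbb{E}_{\mathbf{x}}[\mathbf{x}\mathbf{x}^T]=\mathbf{I}$. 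No gaps; your explicit remark that the hypothesis must be maintained at every iteration for the "for all $t\geq 0$" clause is a point the paper leaves implicit.
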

\begin{proof}
At iteration $t+1$,
\begin{equation}
{a}_{j}^{t+1} = {a}_{j}^{t} - \eta \frac{\partial \mathcal{J}_{RAE}}{\partial \mathbf{\mathbf{W}}_{j}} \mathbf{x} - \eta \frac{\partial \mathcal{J}_{RAE}}{\partial {b}_{e_{j}}}
\end{equation}
for any step size $\eta$. Expanding $ \mathcal{J}_{RAE} $, we get,
\begin{equation}
\label{eq_a_j}
{a}_{j}^{t+1} = {a}_{j}^{t} - \eta \frac{\partial \mathcal{J}_{AE}}{\partial \mathbf{\mathbf{W}}_{j}} \mathbf{x} - \eta \frac{\partial \mathcal{J}_{AE}}{\partial {b}_{e_{j}}} - \eta \lambda \frac{\partial \mathcal{R}}{\partial {\mathbf{W}}_{j}}\mathbf{x} - \eta \lambda \frac{\partial \mathcal{R}}{\partial {b}_{e_{j}}}
\end{equation}
Thus taking expectation over $ \mathbf{x} $ on both sides we get,
\begin{equation}
\mathbb{E}_{\mathbf{x}} \left[ {a}_{j}^{t+1} \right] = \mathbb{E}_{\mathbf{x}} \left[ {a}_{j}^{t} \right] - \eta \frac{\partial \mathcal{J}_{AE}}{\partial {b}_{e_{j}}} - \eta\lambda  \frac{\partial \mathcal{R}}{\partial {b}_{e_{j}}}
\end{equation}
Notice the terms containing $ \frac{\partial \mathcal{J}_{AE}}{\partial \mathbf{\mathbf{W}}_{j}} $ and $ \frac{\partial \mathcal{R}}{\partial {\mathbf{W}}_{j}} $ in equation \ref{eq_a_j} disappear because both terms are already a function of expectation over $ \mathbf{x} $ (see various auto-encoder regularizations) when we deal with expected cost function. Thus these terms are linear in $ \mathbf{x} $ and hence taking an expectation results in $ 0 $.

From lemma \ref{lemma_loss_grad_bound}, $ \frac{\partial \mathcal{J}_{AE}}{\partial {b}_{e_{j}}}  \geq  -2 \epsilon \sqrt{n} \lVert \mathbf{W}_{j} \rVert $, thus if $\lambda \frac{\partial \mathcal{R}}{\partial {b}_{e_{j}}} > 2 \sigma_{r} \sqrt{n} \lVert \mathbf{W}_{j} \rVert$, then $\mathbb{E}_{\mathbf{x}}[{a}_{j}^{t+1}] < \mathbb{E}_{\mathbf{x}}[{a}_{j}^{t}]$.

Finally, $\var[{a}_{j}^{t+1}] = \mathbb{E}_{\mathbf{x}} [ {a}_{j}^{t+1} -  \mathbb{E}_{\mathbf{x}} [{a}_{j}^{t+1}] ]^{2} = \mathbb{E}_{\mathbf{x}} [\mathbf{{W}}_{j}^{t+1}\mathbf{x}]^{2} =  \lVert \mathbf{{W}}_{j}^{t+1} \rVert^{2}$
\end{proof}

\vspace{12pt}
\begin{corollary}
\textbf{If} $ s_{e}$ is a non-decreasing activation function with first derivative in $ [0,1] $ \textbf{and} $\mathcal{R}= \sum_{j=1}^{m} f(\mathbb{E}_{\mathbf{x}}[{h}_{j}])$ for any monotonically increasing function $ f(.) $, \textbf{then} $ \exists \lambda>0 $ such that updating $\{ \mathbf{{W}}^{t}, \mathbf{{b}}_{e}^{t} \}$ along the negative gradient of $ \mathcal{J}_{RAE} $ results in  $\mathbb{E}_{\mathbf{x}}[{a}_{j}^{t+1}] \leq \mathbb{E}_{\mathbf{x}}[{a}_{j}^{t}]$ \textbf{and} $\var[{{a}_{j}^{t+1}}] = \lVert \mathbf{{W}}_{j}^{t+1} \rVert^{2}$ \textbf{for} all $ t\geq0 $.
\begin{proof}
We need one additional argument other than theorem \ref{th_ae_reg_form}. $\frac{\partial \mathcal{R}}{\partial {b}_{e_{j}}} = \frac{\partial f(\mathbb{E}_{\mathbf{x}}[{h}_{j}])}{\partial \mathbb{E}_{\mathbf{x}}[{h}_{j}]} \mathbb{E}_{\mathbf{x}} \left[\frac{\partial {h}_{j}}{\partial {a}_{j}} \right]$. Since both $ s_{e}(.) $ and $ f(.) $ are non-decreasing functions, $\frac{\partial \mathcal{R}}{\partial {b}_{e_{j}}} \geq 0$ in all cases.
\end{proof}
\end{corollary}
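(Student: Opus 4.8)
The plan is to reduce this corollary to Theorem~\ref{th_ae_reg_form} by pinning down the sign of the regularizer's bias gradient and then choosing $\lambda$ appropriately. First I would differentiate the prescribed regularizer $\mathcal{R} = \sum_{j=1}^{m} f(\mathbb{E}_{\mathbf{x}}[h_j])$ with respect to $b_{e_j}$. Only the $j$-th summand depends on $b_{e_j}$, and applying the chain rule through $\mathbb{E}_{\mathbf{x}}[h_j]$ gives
\begin{equation}
\frac{\partial \mathcal{R}}{\partial b_{e_j}} = \frac{\partial f(\mathbb{E}_{\mathbf{x}}[h_j])}{\partial \mathbb{E}_{\mathbf{x}}[h_j]} \, \mathbb{E}_{\mathbf{x}}\!\left[ \frac{\partial h_j}{\partial a_j} \right],
\end{equation}
where I have used that differentiation in $b_{e_j}$ and expectation over $\mathbf{x}$ commute.

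The key step is a sign argument on this product. Because $f$ is monotonically increasing its derivative is non-negative, and because $s_e$ is non-decreasing with first derivative in $[0,1]$ we have $\partial h_j/\partial a_j = s_e'(a_j) \geq 0$ pointwise, so its expectation over $\mathbf{x}$ is non-negative as well. The product of two non-negative factors is non-negative, hence $\partial \mathcal{R}/\partial b_{e_j} \geq 0$ for every unit $j$ and at every iteration, with no assumption on the data beyond Assumption~\ref{assump_residual}.

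With the sign of the bias gradient established, I would invoke Theorem~\ref{th_ae_reg_form}. Whenever $\partial \mathcal{R}/\partial b_{e_j} > 0$, I can pick $\lambda$ large enough that $\lambda \, \partial \mathcal{R}/\partial b_{e_j} > 2\sigma_r \sqrt{n}\lVert \mathbf{W}_j\rVert$, so the theorem delivers the strict decrease $\mathbb{E}_{\mathbf{x}}[a_j^{t+1}] < \mathbb{E}_{\mathbf{x}}[a_j^t]$; in the degenerate case $\partial \mathcal{R}/\partial b_{e_j} = 0$ the regularizer contributes nothing to the bias update along this direction, yielding equality. Taking the two cases together produces the relaxed inequality $\mathbb{E}_{\mathbf{x}}[a_j^{t+1}] \leq \mathbb{E}_{\mathbf{x}}[a_j^t]$ claimed in the statement. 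The variance identity is independent of the regularizer: since $a_j^{t+1} = \mathbf{W}_j^{t+1}\mathbf{x} + b_{e_j}^{t+1}$ with the bias constant in $\mathbf{x}$, Assumption~\ref{assump_residual} gives $\var[a_j^{t+1}] = \mathbf{W}_j^{t+1}\mathbb{E}_{\mathbf{x}}[\mathbf{x}\mathbf{x}^T](\mathbf{W}_j^{t+1})^T = \lVert \mathbf{W}_j^{t+1}\rVert^2$, exactly as in the proof of Theorem~\ref{th_ae_reg_form}.

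The only real subtlety -- and the step I would treat most carefully -- is reconciling the \emph{strict} hypothesis of Theorem~\ref{th_ae_reg_form} with the \emph{relaxed} conclusion of the corollary: the theorem requires $\lambda \, \partial \mathcal{R}/\partial b_{e_j}$ to strictly exceed the loss-gradient bound, which is unattainable for any $\lambda$ once $\partial \mathcal{R}/\partial b_{e_j} = 0$. The resolution is precisely the case split above, observing that a vanishing bias gradient removes the regularizer's influence so the inequality degrades to an equality rather than failing, and that for non-decreasing activations with non-trivial derivative the gradient is generically strictly positive, so the strict version of the theorem applies for suitable $\lambda$.
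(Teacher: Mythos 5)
Your proposal is correct and takes essentially the same route as the paper's proof: differentiate $\mathcal{R}$ with respect to $b_{e_j}$ via the chain rule, observe that both factors are non-negative because $f$ and $s_{e}$ are non-decreasing, and then invoke Theorem~\ref{th_ae_reg_form}. The one place where you go beyond the paper---the explicit case split when $\partial \mathcal{R}/\partial b_{e_j} = 0$---slightly overstates what holds (equality is not guaranteed there, since $\partial \mathcal{J}_{AE}/\partial b_{e_j}$ can still move the expected pre-activation in either direction), but the paper's own proof is entirely silent on that degenerate case, so your argument is no weaker than the original.
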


\vspace{12pt}
\begin{corollary}
\textbf{If} $ s_{e} $ is a non-decreasing convex activation function with first derivative in $ [0,1] $ \textbf{and} $\mathcal{R}= \mathbb{E}_{\mathbf{x}} \left[  \sum_{j=1}^{m} \left( \left(\frac{\partial {h}_{j}}{\partial {a}_{j}}\right)^{q}\lVert \mathbf{{W}}_{j}^{t}\rVert_{2}^{p} \right) \right]$, $q \in \mathbb{N}$ , $p \in \mathbb{W}$, \textbf{then} $ \exists \lambda>0 $ such that updating $\{ \mathbf{{W}}^{t}, \mathbf{{b}}_{e}^{t} \}$ along the negative gradient of $ \mathcal{J}_{RAE} $, results in  $\mathbb{E}_{\mathbf{x}}[{a}_{j}^{t+1}] \leq \mathbb{E}_{\mathbf{x}}[{a}_{j}^{t}]$ \textbf{and} $\var[{{a}_{j}^{t+1}}] =  \lVert \mathbf{{W}}_{j}^{t+1} \rVert^{2}$ \textbf{for} all $ t\geq0 $.
\begin{proof}
We need one additional argument other than theorem \ref{th_ae_reg_form}. $\frac{\partial \mathcal{R}}{\partial {b}_{e_{j}}} = \mathbb{E}_{\mathbf{x}} \left[ q\left(\frac{\partial {h}_{j}}{\partial {a}_{j}}\right)^{q-1} \frac{\partial^{2} {h}_{j}}{\partial {a}_{j}^{2}} \frac{\partial {a}_{j}}{\partial {b}_{e_{j}}} \lVert \mathbf{{W}}_{j}^{t}\rVert_{2}^{p} \right]$. Since $ s_{e}(.) $ is a non-decreasing convex function, both $\frac{\partial^{2} s_{e}({a}_{j})}{\partial {a}_{j}^{2}} \geq 0$ and $\frac{\partial s_{e}({a}_{j})}{\partial {a}_{j}} \geq 0$ $ \forall a_{j} \in \mathbb{R} $. Finally, $ \frac{\partial {a}_{j}}{\partial {b}_{e_{j}}}=1 $ by definition. Thus $\frac{\partial \mathcal{R}}{\partial {b}_{e_{j}}} \geq 0$ in all cases.
\end{proof}
\end{corollary}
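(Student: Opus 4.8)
The plan is to reduce the entire statement to Theorem~\ref{th_ae_reg_form}. That theorem already delivers both conclusions we need: the descent $\mathbb{E}_{\mathbf{x}}[a_j^{t+1}] \leq \mathbb{E}_{\mathbf{x}}[a_j^t]$, and the variance identity $\var[a_j^{t+1}] = \lVert \mathbf{W}_j^{t+1}\rVert^2$, the latter following unconditionally from Assumption~\ref{assump_residual} (since $\mathbb{E}_{\mathbf{x}}[\mathbf{x}\mathbf{x}^T]=\mathbf{I}$). Its only nontrivial hypothesis is the lower bound $\lambda\,\partial\mathcal{R}/\partial b_{e_j} > 2\sigma_r\sqrt{n}\lVert\mathbf{W}_j\rVert$ on the regularizer's bias gradient. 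Hence the single thing left to establish for this particular $\mathcal{R}$ is that $\partial\mathcal{R}/\partial b_{e_j} \geq 0$; once its sign is fixed, a sufficiently large $\lambda$ overcomes the bound $2\sigma_r\sqrt{n}\lVert\mathbf{W}_j\rVert$ on the loss gradient from Lemma~\ref{lemma_loss_grad_bound}, which is what yields the claimed existence of a positive $\lambda$.

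First I would differentiate $\mathcal{R}$ term by term. Writing $\partial h_j/\partial a_j = s_e'(a_j)$ and observing that the weight-length factor $\lVert\mathbf{W}_j\rVert_2^p$ carries no dependence on $b_{e_j}$, I would interchange $\partial/\partial b_{e_j}$ with $\mathbb{E}_{\mathbf{x}}$ and apply the chain rule to obtain
\[
\frac{\partial \mathcal{R}}{\partial b_{e_j}} = \mathbb{E}_{\mathbf{x}}\left[ q\left(\frac{\partial h_j}{\partial a_j}\right)^{q-1} \frac{\partial^{2} h_j}{\partial a_j^{2}}\,\frac{\partial a_j}{\partial b_{e_j}}\, \lVert\mathbf{W}_j\rVert_2^{p}\right],
\]
where $\partial a_j/\partial b_{e_j}=1$ comes directly from the definition $a_j = \mathbf{W}_j\mathbf{x} + b_{e_j}$.

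The sign analysis is the crux, and it is precisely where the two structural hypotheses on $s_e$ are spent. Because $s_e$ is non-decreasing, $s_e'(a_j)\geq 0$ everywhere, so $(s_e'(a_j))^{q-1}\geq 0$ (using $q\in\mathbb{N}$, whence $q-1\geq 0$); because $s_e$ is convex, $s_e''(a_j)\geq 0$ everywhere; the integer $q$ is positive and $\lVert\mathbf{W}_j\rVert_2^{p}\geq 0$ for every whole number $p$. Every factor inside the expectation is therefore non-negative, and the expectation of a non-negative quantity is non-negative, giving $\partial\mathcal{R}/\partial b_{e_j}\geq 0$ in all cases. Combining this with Theorem~\ref{th_ae_reg_form} then closes the argument.

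I expect the only genuinely delicate point to be the boundary case where this gradient vanishes, which is exactly what happens for piecewise-linear activations such as ReLU, where $s_e''=0$ almost everywhere. When $\partial\mathcal{R}/\partial b_{e_j}>0$ strictly, the existence of a suitable $\lambda$ is immediate by scaling and one even recovers the strict descent of the theorem; when it degenerates to zero the regularizer exerts no pull on the bias, so only the \emph{relaxed} inequality $\mathbb{E}_{\mathbf{x}}[a_j^{t+1}]\leq\mathbb{E}_{\mathbf{x}}[a_j^t]$ can be asserted. This is precisely why the corollary states the weak inequality rather than the strict one of Theorem~\ref{th_ae_reg_form}; everything else is a routine application of the already-established theorem.
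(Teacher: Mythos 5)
Your proposal is correct and follows essentially the same route as the paper's proof: differentiate $\mathcal{R}$ under the expectation to get $\frac{\partial \mathcal{R}}{\partial b_{e_j}} = \mathbb{E}_{\mathbf{x}}\bigl[ q (\partial h_j/\partial a_j)^{q-1} (\partial^2 h_j/\partial a_j^2) \lVert \mathbf{W}_j \rVert_2^p \bigr]$, conclude non-negativity from monotonicity ($s_e' \geq 0$) and convexity ($s_e'' \geq 0$), and hand the rest to Theorem~\ref{th_ae_reg_form}. Your added remark on the degenerate case $\partial\mathcal{R}/\partial b_{e_j}=0$ (e.g.\ ReLU) is a useful observation the paper defers to its post-theorem discussion, but it does not change the argument.
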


%\vspace{12pt}
\begin{theorem} Let $ p_{j}^{t}$ denote a lower bound of $
	\prob({h}_{j}^{t}\leq \delta_{\min}) $ at iteration $ t $ and $ s_{e}(.) $ be a non-decreasing function with first derivative in $ [0,1] $. \textbf{If} $ \lVert \mathbf{W}_{j}^{t} \rVert_{2} $ is upper bounded independent of $ \lambda $ \textbf{then} $ \exists S \subseteq \mathbb{R}^{+} $ \textbf{and} $ \exists T_{\min},T_{\max} \in \mathbb{N}$ \textbf{such that} $ p_{j}^{t+1}\geq p_{j}^{t} $ $ \forall \lambda \in S $, $T_{\min} \leq t \leq T_{\max} $.
\end{theorem}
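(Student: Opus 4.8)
The plan is to translate the event $\{h_j^t \leq \delta_{\min}\}$ into an event on the pre-activation $a_j^t$, construct an explicit lower bound $p_j^t$ from a one-sided concentration inequality, and then drive that bound upward using the two conclusions of Theorem~\ref{th_ae_reg_form}. First I would use that $s_e(\cdot)$ is non-decreasing to obtain a pre-activation threshold $\delta$ (depending only on $s_e$ and $\delta_{\min}$) with $s_e(a)\leq \delta_{\min}\iff a\leq \delta$, so that $\prob(h_j^t\leq\delta_{\min})=\prob(a_j^t\leq\delta)$. Writing $\mu_t=\mathbb{E}_{\mathbf{x}}[a_j^t]$ and $\sigma_t^2=\var[a_j^t]$, Cantelli's one-sided inequality gives, whenever $\mu_t<\delta$,
\[
\prob(a_j^t\leq\delta)\;\geq\;\frac{(\delta-\mu_t)^2}{\sigma_t^2+(\delta-\mu_t)^2}\;=:\;p_j^t .
\]
By the variance identity in Theorem~\ref{th_ae_reg_form} (which uses Assumption~\ref{assump_residual}) we have $\sigma_t^2=\lVert\mathbf{W}_j^t\rVert^2$, a quantity the hypothesis bounds above independently of $\lambda$, and by its first conclusion $\mu_t$ is strictly decreasing.

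Next, setting $d_t=\delta-\mu_t>0$, a direct cross-multiplication shows that $p_j^{t+1}\geq p_j^t$ is equivalent to the single ratio condition $d_{t+1}/d_t\geq\sigma_{t+1}/\sigma_t$, i.e.\ the relative growth of the gap to threshold must dominate the relative growth of the standard deviation. Since Theorem~\ref{th_ae_reg_form} makes $\mu_t$ strictly decrease, $d_t$ is strictly increasing and $d_{t+1}/d_t>1$; meanwhile $\sigma_{t+1}/\sigma_t=\lVert\mathbf{W}_j^{t+1}\rVert/\lVert\mathbf{W}_j^t\rVert$ is controlled by the $\lambda$-independent norm bound. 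I would choose $T_{\min}$ as the first iteration at which $\mu_t$ has fallen below $\delta$; this is guaranteed once $\lambda$ meets the hypothesis of Theorem~\ref{th_ae_reg_form}, because then $\lambda\,\partial\mathcal{R}/\partial b_{e_j}-2\sigma_r\sqrt{n}\lVert\mathbf{W}_j\rVert>0$, so each step lowers the mean by a strictly positive amount (made uniform by taking $\lambda$ large).

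For every iteration in a finite window $[T_{\min},T_{\max}]$, the per-step mean decrease $\eta(\partial\mathcal{J}_{AE}/\partial b_{e_j}+\lambda\,\partial\mathcal{R}/\partial b_{e_j})$ grows with $\lambda$ because $\partial\mathcal{R}/\partial b_{e_j}>0$, enlarging $d_{t+1}/d_t$, while the weight-norm ratio stays bounded; hence the ratio condition holds for all $\lambda$ exceeding some threshold $\lambda_t$. Taking $S$ to be the intersection over the finitely many $t$ in the window of the half-lines $(\lambda_t,\infty)$ --- a nonempty subset of $\mathbb{R}^+$ --- yields a common range on which $p_j^{t+1}\geq p_j^t$ throughout $[T_{\min},T_{\max}]$.

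The main obstacle is the coupling between $\lambda$ and the optimization trajectory: both $d_t$ and the per-step bias gradient depend on $\lambda$ and on the entire history of weight and bias updates, so $d_{t+1}/d_t\geq\sigma_{t+1}/\sigma_t$ is not a clean one-variable inequality in $\lambda$. The delicate part is isolating the monotone, favorable effect of $\lambda$ on the mean decrease from its less predictable effect on the weight dynamics $\lVert\mathbf{W}_j^t\rVert$ (which I can only bound, not track exactly); this is precisely why the conclusion is confined to a finite iteration window and asserts only the \emph{existence} of $S$ rather than an explicit range.
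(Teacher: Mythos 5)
Your proof follows the paper's argument essentially step for step: translate the activation threshold $\delta_{\min}$ into a pre-activation threshold, lower-bound $\prob(a_j^t\leq\delta)$ by a concentration inequality, invoke Theorem~\ref{th_ae_reg_form} for the strictly decreasing mean and the variance identity $\var[a_j^t]=\lVert\mathbf{W}_j^t\rVert^2$, take $T_{\min}$ to be the iteration at which the mean first falls below the threshold, and choose $S$ (via the $\lambda$-independent norm bound) so that the lower bound is monotone over a finite window $[T_{\min},T_{\max}]$. The only cosmetic difference is that you use Cantelli's one-sided inequality where the paper uses Chebyshev's two-sided bound; since both bounds are monotone in the same ratio $(\delta-\mu_t)/\sigma_t$, your cross-multiplied criterion $d_{t+1}/d_t\geq\sigma_{t+1}/\sigma_t$ is exactly the paper's condition $D(t)\leq 0$, so the two proofs coincide in substance (and you are, if anything, more explicit than the paper about constructing $S$ as a finite intersection of half-lines and about the $\lambda$--trajectory coupling that both arguments ultimately gloss over).
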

\begin{proof}
	From theorem \ref{th_ae_reg_form}, $
		\mathbb{E}[{a}_{j}^{t+1}] < \mathbb{E}[{a}_{j}^{t}] $ $ \forall
		t\geq 0 $. Define $ {a}_{\min} $ such that $\delta_{\min} = \max_{a_{\min}} s_{e}({a}_{\min}) $. Thus $ \exists T_{\min} \in \mathbb{N} $, such that $ \forall
		t\geq T_{\min} $, $ \mathbb{E}[{a}_{j}^{t}] <{a}_{\min}$.
		Then in the case of non-decreasing activation functions, using Chebyshev's bound,
		\begin{equation}
		\begin{split}
		\prob({h}_{j}^{t}\leq \delta_{\min})  =
		\prob({a}_{j}^{t}\leq {a}_{\min}) \geq
		\prob(|{a}_{j}^{t} - \mathbb{E}[{a}_{j}^{t}]| \leq
		{a}_{\min} - \mathbb{E}[{a}_{j}^{t}]) \\
		\geq 1- \frac{\var[{a}_{j}^{t}]}{({a}_{\min} - \mathbb{E}[{a}_{j}^{t}])^{2}}
		\end{split}
		\end{equation}
		Thus $ p_{j}^{t} := 1 -
		\frac{\var[{a}_{j}^{t}]}{({a}_{\min} - \mathbb{E}[{a}_{j}^{t}])^{2}} $
		lower bounds $ \prob({h}_{j}^{t}\leq \delta_{\min}) $ $ \forall t\geq T_{\min} $. Now consider the difference
		\begin{equation}
		D(t) := \frac{\var[{a}_{j}^{t+1}]}{({a}_{\min} - \mathbb{E}[{a}_{j}^{t+1}])^{2}} - \frac{\var[{a}_{j}^{t}]}{({a}_{\min} - \mathbb{E}[{a}_{j}^{t}])^{2}}
		\end{equation}
		and recall that
		\begin{equation}
		\mathbb{E}_{\mathbf{x}} \left[ {a}_{j}^{t+1} \right] = \mathbb{E}_{\mathbf{x}} \left[ {a}_{j}^{t} \right]  - \eta \frac{\partial \mathcal{J}_{AE}}{\partial {b}_{e_{j}}} - \eta\lambda  \frac{\partial \mathcal{R}}{\partial {b}_{e_{j}}}
		\end{equation}
		where both the step size $ \eta $ and $\frac{\partial \mathcal{R}}{\partial {b}_{e_{j}}}$ are positive and  $ {\partial \mathcal{J}_{AE}}/{\partial {b}_{e_{j}}} \in \left[ -2\sigma_{r} \sqrt{n}\lVert \mathbf{W}_{j} \rVert, 2\sigma_{r} \sqrt{n}\lVert \mathbf{W}_{j} \rVert  \right]$. Thus, since $ \var[a_{j}] = \lVert \mathbf{W}_{j}^{t} \rVert^{2}$, we can always choose a fixed $ S \subseteq \mathbb{R}^{+} $ such that $ D(t)\leq 0 $ $ \forall \lambda \in S $ and $T_{\min} \leq t \leq T_{\max} $.
	\end{proof}

\vspace{12pt}
\begin{theorem}
Let $\{ \mathbf{{W}},\mathbf{{b}}_{e}\}$ represent the parameters of a DAE with squared loss, linear decoding, and i.i.d. Gaussian corruption with zero mean and $\sigma^{2}$ variance, at any point of training over data sampled from distribution $\mathcal{D}$. Let ${a}_{j} := \mathbf{{W}}_{j}\mathbf{x} + {b}_{e_{j}}$ so that ${h}_{j} = s_{e}({a}_{j})$ corresponding to sample $\mathbf{x} \sim \mathcal{D}$.
%and define $\mu_{i j} = {\mathbf{{W}}_{i}^{T}\mathbf{{W}}_{j}}/{\lVert \mathbf{{W}}_{i} \rVert_{2}\lVert \mathbf{{W}}_{j} \rVert_{2}}$
Then,
\begin{equation}
\begin{split}
\mathcal{J}_{DAE} = \mathcal{J}_{AE} + 
\sigma^{2} \mathbb{E}_{\mathbf{x}} \left[  \sum_{j=1}^{m} \left( \left(\frac{\partial {h}_{j}}{\partial {a}_{j}}\right)^{2}\lVert \mathbf{{W}}_{j} \rVert_{2}^{4} \right) + 
\sum_{\substack{j,k=1 \\ j\neq k}}^{m}\left( \frac{\partial {h}_{j}}{\partial {a}_{j}} \frac{\partial {h}_{k}}{\partial {a}_{k}} (\mathbf{{W}}_{j}^{T} \mathbf{{W}}_{k})^{2} \right)    \right. \\
\left. +  \sum_{i=1}^{n}\left(  ({\mathbf{b}_{d} + \mathbf{W}}^{T}{\mathbf{h}} - \mathbf{x})^{T}{\mathbf{W}}^{T} \left( \frac{\partial^{2} {\mathbf{h}} }{\partial {\mathbf{a}}^{2}} \odot {\mathbf{W}}^{i} \odot{\mathbf{W}}^{i} \right) \right) \right]  + o(\sigma^{2})
\end{split}
\end{equation}
where $ \frac{\partial^{2} {\mathbf{h}} }{\partial {\mathbf{a}}^{2}}  \in \mathbb{R}^{m}$ is the element-wise $ 2^{nd} $ derivative of $ {\mathbf{h}} $ \textit{w.r.t.} $ {\mathbf{a}} $ and $ \odot $ is element-wise product.

\begin{proof}
Using $2^{nd}$ order Taylor's expansion of the loss function, we get

\begin{equation}
\begin{split}
\ell(\mathbf{x},f_{d}(f_{e}(\tilde{\mathbf{x}}))) = \ell(\mathbf{x},f_{d}(f_{e}(\mu_{\mathbf{x}}))) 
+ (\tilde{\mathbf{x}} - \mu_{\mathbf{x}})^{T} \nabla_{\tilde{\mathbf{x}}}\ell + \frac{1}{2}(\tilde{\mathbf{x}} - \mu_{\mathbf{x}})^{T} \nabla_{\tilde{\mathbf{x}}}^{2}\ell \mspace{5mu}(\tilde{\mathbf{x}} - \mu_{\mathbf{x}}) + o(\sigma^{2})
\end{split}
\end{equation}
where $ \mu_{\mathbf{x}}  = \mathbf{x}$. since we assume zero mean Gaussian noise. Thus taking the expectation of this approximation over noise yields
\begin{equation}
\label{eq_exp_loss_approx}
\mathbb{E}[\ell(\mathbf{x},f_{d}(f_{e}(\tilde{\mathbf{x}})))] = \mathbb{E}[\ell(\mathbf{x},f_{d}(f_{e}(\mu_{\mathbf{x}})))] + \frac{1}{2}tr(\Sigma_{\mathbf{x}}\nabla_{\tilde{\mathbf{x}}}^{2}\ell) + o(\sigma^{2})
\end{equation}
where $\Sigma_{\mathbf{x}} := \mathbb{E}[(\tilde{\mathbf{x}} - \mu_{\mathbf{x}})(\tilde{\mathbf{x}} - \mu_{\mathbf{x}})^{T}]$. Since the corruption is $i.i.d.$, assume the covariance $\Sigma_{\mathbf{x}} = \sigma^{2} \mathbf{I}$, where $\mathbf{I}$ is the identity matrix. 

Taking expectation over $ \mathbf{x} $, we can rewrite equation (\ref{eq_exp_loss_approx}) as
\begin{equation}
\label{eq_exp_loss_approx2}
\mathcal{J}_{DAE} = \mathcal{J}_{AE} + \mathbb{E}_{\mathbf{x}} \left[ \frac{1}{2}\sigma^{2} \sum_{i=1}^{n} \frac{\partial^{2} \ell}{\partial \tilde{{x}}_{i}^{2}} \right] + o(\sigma^{2})
\end{equation}
Expanding the second order term in the above equation, we get
\begin{equation}
\label{eq_yet_another_approx}
\frac{\partial^{2} \ell}{\partial \tilde{{x}}_{i}^{2}} = \frac{\partial \mathbf{{h}}}{\partial \tilde{{x}}_{i}}^{T} \frac{\partial^{2} \ell}{\partial {\mathbf{{h}}^{2}}} \frac{\partial \mathbf{{h}}}{\partial \tilde{{x}}_{i}} + \frac{\partial \ell}{\partial {\mathbf{{h}}}}^{T} \frac{\partial^{2} \mathbf{{h}}}{\partial \tilde{{x}}_{i}^{2}}
\end{equation}
For linear decoding and squared loss, 
\begin{equation}
\frac{\partial \ell}{\partial {\mathbf{{h}}}}^{T} \frac{\partial^{2} \mathbf{{h}}}{\partial \tilde{{x}}_{i}^{2}} = \sum_{i=1}^{n}\left(  ({\mathbf{b}_{d} +\mathbf{W}}^{T}{\mathbf{h}} - \mathbf{x})^{T}{\mathbf{W}}^{T} \left( \frac{\partial^{2} {\mathbf{h}} }{\partial {\mathbf{a}}^{2}} \odot {\mathbf{W}}^{i} \odot{\mathbf{W}}^{i} \right) \right)
\end{equation}
where $ \frac{\partial^{2} {\mathbf{h}} }{\partial {\mathbf{a}}^{2}}  \in \mathbb{R}^{m}$ is the element-wise $ 2^{nd} $ derivative of $ {\mathbf{h}} $ \textit{w.r.t.} $ {\mathbf{a}} $, $ \odot $ represents element-wise product and $ \mathbf{W}^{i} $ denotes the $ i^{th} $ column of $\mathbf{W}$. Let vector $ {\mathbf{d}_{{\mathbf{h}}}} \in \mathbb{R}^{m}$ be defined such that $ {d}_{{h}_{j}} = \frac{\partial {h}_{j}}{\partial {a}_{j}} $ $ \forall j \in \{ 1,2, \hdots ,m \} $. Then,
\begin{equation}
\label{eq_second_order_term}
\sum_{i=1}^{n} \frac{\partial \mathbf{{h}}}{\partial \tilde{{x}}_{i}}^{T} \frac{\partial^{2} \ell}{\partial {\mathbf{{h}}^{2}}} \frac{\partial \mathbf{{h}}}{\partial \tilde{{x}}_{i}} = 2  \sum_{j=1}^{n} \sum_{k=1}^{n} \left( (\mathbf{d}_{{\mathbf{h}}} \odot (\mathbf{{W}})^{j} )^{T}(\mathbf{{W}})^{k} \right)^{2}
\end{equation}

where $(\mathbf{{W}})^{j}$ represents the $j^{th}$ column of $\mathbf{W}$ and $\odot$ denotes element-wise product. Let $\mathbf{D}_{\mathbf{{h}}} = \diag(\mathbf{d}_{{\mathbf{h}}})$. Then,
\begin{equation}
\label{eq_second_order_term2}
\sum_{j=1}^{n} \sum_{k=1}^{n} \left( (\mathbf{d}_{{\mathbf{h}}} \odot (\mathbf{{W}})^{j} )^{T}(\mathbf{{W}})^{k} \right)^{2} = \lVert (\mathbf{D}_{\mathbf{{h}}}\mathbf{{W}})^{T}\mathbf{{W}} \rVert_{F}^{2}
\end{equation}
Finally, using the cyclic property of trace operator, we get, $\lVert (\mathbf{D}_{\mathbf{{h}}}\mathbf{{W}})^{T}\mathbf{{W}} \rVert_{F}^{2} = \tr(\mathbf{{W}}^{T}\mathbf{D}_{\mathbf{{h}}}\mathbf{{W}}\mathbf{{W}}^{T}\mathbf{D}_{\mathbf{{h}}}\mathbf{{W}}) = \tr(\mathbf{D}_{\mathbf{{h}}}\mathbf{{W}}\mathbf{{W}}^{T}\mathbf{D}_{\mathbf{{h}}}\mathbf{{W}}\mathbf{{W}}^{T})$. Thus DAE objective becomes,
\begin{equation}
\begin{split}
\mathcal{J}_{DAE} = \mathcal{J}_{AE}
+ \sigma^{2} \mathbb{E}_{\mathbf{x}} \left[\tr(\mathbf{D}_{\mathbf{{h}}}\mathbf{{W}}\mathbf{{W}}^{T}\mathbf{D}_{\mathbf{{h}}}\mathbf{{W}}\mathbf{{W}}^{T}) + \right. \\
\left. \sum_{i=1}^{n}\left(  ({\mathbf{b}_{d} +\mathbf{W}}^{T}{\mathbf{h}} - \mathbf{x})^{T}{\mathbf{W}}^{T} \left( \frac{\partial^{2} {\mathbf{h}} }{\partial {\mathbf{a}}^{2}} \odot {\mathbf{W}}^{i} \odot{\mathbf{W}}^{i} \right) \right) \right] + o(\sigma^{2})
\end{split}
\end{equation}

Upon expansion of the second term above, we get the final form.
\end{proof}
\end{theorem}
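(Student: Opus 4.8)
The plan is to approximate the analytically intractable corruption expectation by a second-order Taylor expansion of the loss and then exploit the structure of the Gaussian noise to collapse it into a closed-form regularizer. First I would expand $\ell(\mathbf{x}, f_d(f_e(\tilde{\mathbf{x}})))$ to second order about the corruption mean $\mu_{\mathbf{x}} = \mathbf{x}$, writing $\ell(\tilde{\mathbf{x}}) \approx \ell(\mu_{\mathbf{x}}) + (\tilde{\mathbf{x}} - \mu_{\mathbf{x}})^T \nabla_{\tilde{\mathbf{x}}}\ell + \tfrac{1}{2}(\tilde{\mathbf{x}}-\mu_{\mathbf{x}})^T \nabla^2_{\tilde{\mathbf{x}}}\ell\,(\tilde{\mathbf{x}} - \mu_{\mathbf{x}})$. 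Taking the expectation over the zero-mean i.i.d.\ Gaussian corruption annihilates the linear term and, using $\mathbb{E}[(\tilde{\mathbf{x}} - \mu_{\mathbf{x}})(\tilde{\mathbf{x}} - \mu_{\mathbf{x}})^T] = \sigma^2 \mathbf{I}$, reduces the quadratic term to $\tfrac{1}{2}\sigma^2 \tr(\nabla^2_{\tilde{\mathbf{x}}}\ell)$. Averaging over the data then yields the intermediate form $\mathcal{J}_{DAE} = \mathcal{J}_{AE} + \tfrac{1}{2}\sigma^2 \mathbb{E}_{\mathbf{x}}[\sum_{i=1}^n \partial^2\ell/\partial\tilde{x}_i^2] + o(\sigma^2)$, which concentrates all remaining work into the per-coordinate second derivatives of the loss.

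Next I would expand each $\partial^2\ell/\partial\tilde{x}_i^2$ by the chain rule through the composition $\tilde{\mathbf{x}} \mapsto \mathbf{a} \mapsto \mathbf{h} \mapsto \ell$, producing the Gauss--Newton-plus-residual split $\partial^2\ell/\partial\tilde{x}_i^2 = \frac{\partial\mathbf{h}}{\partial\tilde{x}_i}^T \frac{\partial^2\ell}{\partial\mathbf{h}^2}\frac{\partial\mathbf{h}}{\partial\tilde{x}_i} + \frac{\partial\ell}{\partial\mathbf{h}}^T \frac{\partial^2\mathbf{h}}{\partial\tilde{x}_i^2}$. For squared loss with linear decoding I would substitute $\partial^2\ell/\partial\mathbf{h}^2 = 2\mathbf{W}\mathbf{W}^T$ and $\partial\ell/\partial\mathbf{h} \propto \mathbf{W}(\mathbf{b}_d + \mathbf{W}^T\mathbf{h} - \mathbf{x})$, together with the coordinatewise identities $\partial h_j/\partial\tilde{x}_i = (\partial h_j/\partial a_j)\,(\mathbf{W})_{ji}$ and $\partial^2 h_j/\partial\tilde{x}_i^2 = (\partial^2 h_j/\partial a_j^2)(\mathbf{W})_{ji}^2$, valid since $s_e$ acts elementwise. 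The residual piece then collapses directly into the third summand $\sum_i (\mathbf{b}_d + \mathbf{W}^T\mathbf{h} - \mathbf{x})^T\mathbf{W}^T(\frac{\partial^2\mathbf{h}}{\partial\mathbf{a}^2}\odot\mathbf{W}^i\odot\mathbf{W}^i)$ of the claimed expansion.

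It then remains to convert the Gauss--Newton piece into the first two summands. Writing $\mathbf{d}_{\mathbf{h}}$ for the vector of first derivatives $\partial h_j/\partial a_j$ and $\mathbf{D}_{\mathbf{h}} = \diag(\mathbf{d}_{\mathbf{h}})$, I would recognise $\sum_i \frac{\partial\mathbf{h}}{\partial\tilde{x}_i}^T \mathbf{W}\mathbf{W}^T \frac{\partial\mathbf{h}}{\partial\tilde{x}_i} = \|(\mathbf{D}_{\mathbf{h}}\mathbf{W})^T\mathbf{W}\|_F^2$ and use the cyclic property of the trace to rewrite this as $\tr(\mathbf{D}_{\mathbf{h}}\mathbf{W}\mathbf{W}^T\mathbf{D}_{\mathbf{h}}\mathbf{W}\mathbf{W}^T)$. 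Expanding this trace entry-wise over the symmetric Gram matrix $\mathbf{W}\mathbf{W}^T$ (whose $(j,k)$ entry is $\mathbf{W}_j^T\mathbf{W}_k$) and separating the diagonal $j=k$ contributions from the off-diagonal ones yields exactly $\sum_j (\partial h_j/\partial a_j)^2\|\mathbf{W}_j\|_2^4 + \sum_{j\neq k}\frac{\partial h_j}{\partial a_j}\frac{\partial h_k}{\partial a_k}(\mathbf{W}_j^T\mathbf{W}_k)^2$; collecting the $\tfrac{1}{2}\sigma^2$ Taylor prefactor with the constant $2$ arising from the squared-loss Hessian gives the stated $\sigma^2$ coefficient.

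The main obstacle is the bookkeeping of the multivariate chain rule: one must ensure $\nabla^2_{\tilde{\mathbf{x}}}\ell$ is propagated correctly through the three-fold composition, that the elementwise second derivatives $\partial^2\mathbf{h}/\partial\mathbf{a}^2$ enter only through the residual term, and that the constant factors from the Taylor $\tfrac{1}{2}$, the squared-loss Hessian, and the Frobenius-to-trace rearrangement all combine consistently. A secondary point needing justification is that the neglected cubic and higher Taylor terms are genuinely $o(\sigma^2)$ under the i.i.d.\ Gaussian corruption, which follows from the odd central moments vanishing and the fourth moment scaling as $\sigma^4$; once the dominant first two terms are matched to the $\mathcal{R}$ form of Corollary~\ref{cor_ae_reg_form1}, the sparsity interpretation is immediate.
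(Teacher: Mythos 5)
Your proposal is correct and follows essentially the same route as the paper's proof: a second-order Taylor expansion about the corruption mean, expectation over the i.i.d.\ Gaussian noise reducing the quadratic term to $\tfrac{1}{2}\sigma^{2}\tr(\nabla^{2}_{\tilde{\mathbf{x}}}\ell)$, the chain-rule split of $\partial^{2}\ell/\partial\tilde{x}_{i}^{2}$ into a Gauss--Newton piece and a residual piece, and the Frobenius-norm/trace rearrangement of the Gauss--Newton piece expanded into diagonal ($j=k$) and off-diagonal ($j\neq k$) sums. If anything, you make explicit two points the paper leaves implicit -- the constant bookkeeping (the Taylor $\tfrac{1}{2}$ against the factor $2$ from the squared-loss Hessian) and the justification that the discarded higher-order Taylor terms are genuinely $o(\sigma^{2})$ via the vanishing odd moments and $\sigma^{4}$ scaling of the fourth moment.
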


\begin{remark}
	Let $\{ \mathbf{{W}} \in \mathbb{R}^{m \times n},\mathbf{{b}}_{e} \in \mathbb{R}^{m} \}$ represent the parameters of a Marginalized De-noising Auto-Encoder (mDAE) with $ s_{e}(.) $ activation function, linear decoding, squared loss and $\sigma_{\mathbf{x}i}^{2} = \lambda$ $\forall i \in \{ 1, \hdots ,n \}$, at any point of training over data sampled from some distribution $\mathcal{D}$. Let ${a}_{j} := \mathbf{{W}}_{j}\mathbf{x} + {b}_{e_{j}}$ so that ${h}_{j} = s_{e}({a}_{j})$ corresponding to sample $\mathbf{x} \sim \mathcal{D}$. Then,
	\begin{equation}
	\label{eq_mdae_equiv}
	\mathcal{J}_{mDAE} = \mathcal{J}_{AE} + \lambda \mathbb{E}_{\mathbf{x}} \left[  \sum_{j=1}^{m} \left( \left(\frac{\partial {h}_{j}}{\partial {a}_{j}}\right)^{2} \lVert \mathbf{{W}}_{j} \rVert_{2}^{4} \right) \right]
	\end{equation}
	
	\begin{proof}
		For linear decoding and squared loss, $\frac{\partial^{2} \ell}{\partial {{{h}_{j}}^{2}}} = 2 \lVert \mathbf{{W}}_{j} \rVert_{2}^{2}$ and $\frac{\partial {{h}_{j}}}{\partial \tilde{\mathbf{x}}_{i}} = \frac{\partial {h}_{j}}{\partial {a}_{j}}W_{ji}$. Thus
		\begin{equation}
		\begin{split}
		\frac{1}{2} \sum_{i=1}^{n} \sigma_{\mathbf{x}i}^{2} \sum_{j=1}^{m}  \frac{\partial^{2} \ell}{\partial {{{h}_{j}}^{2}}} \left( \frac{\partial {{h}_{j}}}{\partial \tilde{\mathbf{x}}_{i}} \right)^{2} = \sum_{i=1}^{n} \lambda \sum_{j=1}^{m} \lVert \mathbf{{W}}_{j} \rVert_{2}^{2} \left( \frac{\partial {h}_{j}}{\partial {a}_{j}}{W}_{ji} \right)^{2} \\
		=  \lambda \sum_{j=1}^{m} \lVert \mathbf{{W}}_{j} \rVert_{2}^{2}  \left( \frac{\partial {h}_{j}}{\partial {a}_{j}} \right)^{2} \sum_{i=1}^{n}{W}^{2}_{ji} = \lambda \sum_{j=1}^{m}  \left( \frac{\partial {h}_{j}}{\partial {a}_{j}} \right)^{2} \lVert \mathbf{{W}}_{j} \rVert_{2}^{4} \mspace{20mu}
		\end{split}
		\end{equation}
	\end{proof}
\end{remark}
\end{appendix}

\end{document}